\documentclass[sigconf]{acmart}
\usepackage[utf8]{inputenc}
\usepackage[T1]{fontenc}

\AtBeginDocument{%
  }
\setcopyright{acmlicensed}
\copyrightyear{2027}
\acmYear{2027}
\acmDOI{XXXXXXX.XXXXXXX}
\acmConference[]{Conference’17, Washington, DC, USA}{}{}
\acmISBN{978-1-4503-XXXX-X/2018/06}
\usepackage{algorithm}
\usepackage{comment}
\usepackage[noend]{algpseudocode}
\usepackage{tabularx}  

\newtheorem{fedranprop}{Proposition}
\newtheorem{fedranthm}{Theorem}
\newtheorem{fedrancor}{Corollary}

\allowdisplaybreaks
\usepackage{balance}
\usepackage{tikz}
\usetikzlibrary{arrows.meta,positioning,fit,calc,shapes.geometric}

\begin{document}
\title{Accurate and Resource-Efficient Federated Continual Learning}


\author{Jebacyril Arockiaraj}
\email{arockiar@usc.edu}
\affiliation{
  \institution{University of Southern California}
  \city{Los Angeles}
  \country{USA}
}

\author{Dhruv Parikh}
\email{dhruvash@usc.edu}
\affiliation{
  \institution{University of Southern California}
  \city{Los Angeles}
  \country{USA}
}

\author{Jayashree Adivarahan}
\email{adivarah@usc.edu}
\affiliation{
  \institution{University of Southern California}
  \city{Los Angeles}
  \country{USA}
}

\author{Rajgopal Kannan}
\email{rajgopal.kannan.civ@army.mil}
\affiliation{
  \institution{DEVCOM Army Research Office}
  \country{USA}
}

\author{Viktor Prasanna}
\email{prasanna@usc.edu}
\affiliation{
  \institution{University of Southern California}
  \city{Los Angeles}
  \country{USA}
}

\begin{abstract}
Federated continual learning (FCL) must learn from distributed task streams under limited resources, such as communication, computation, memory, and label availability. Existing FCL methods often rely on repeated local optimization, replay, and full supervision. Analytic alternatives avoid iterative training and replay, but using high-dimensional random features to improve accuracy requires a second-order feature statistic, the Gram matrix, which has a quadratic communication cost in the random feature size $M$. We propose \textbf{FedRAN}, a resource-aware analytic FCL framework that replaces gradient-based updates with compact random feature statistics. 
Each client transmits a truncated-SVD summary of its Gram matrix, reducing the dominant second-order upload from quadratic to linear in $M$ for fixed rank. The server performs a two-level QR-SVD subspace merge, spatially across clients and temporally across tasks, and solves a ridge classifier in closed form. FedRAN further supports label scarcity through prototype-based pseudo-labeling. Across CIFAR-100, ImageNet-R, and VTAB datasets, FedRAN improves average accuracy by up to 4.8 percentage points over the strongest baseline, uses 30.6--121.8$\times$ less per-client communication than optimization-based FCL, and is 190.3$\times$ faster on average than gradient-based baselines; with only 20\% labels, pseudo-labeling improves average accuracy by up to 6.61 points. These results show that FedRAN enables accurate and resource-efficient FCL under communication, computation, and label constraints.
The source code is available at
\url{https://github.com/JebacyrilArockiaraj/Fed-RAN-SSL}.
\end{abstract}

\begin{CCSXML}
<ccs2012>
   <concept>
       <concept_id>10010147.10010257</concept_id>
       <concept_desc>Computing methodologies~Machine learning</concept_desc>
       <concept_significance>500</concept_significance>
   </concept>
   <concept>
       <concept_id>10010147.10010257.10010258</concept_id>
       <concept_desc>Computing methodologies~Learning paradigms</concept_desc>
       <concept_significance>500</concept_significance>
   </concept>
   <concept>
       <concept_id>10010147.10010257.10010258.10010262.10010278</concept_id>
       <concept_desc>Computing methodologies~Lifelong machine learning</concept_desc>
       <concept_significance>500</concept_significance>
   </concept>
   <concept>
       <concept_id>10010520.10010521.10010537</concept_id>
       <concept_desc>Computer systems organization~Distributed architectures</concept_desc>
       <concept_significance>300</concept_significance>
   </concept>
   <concept>
       <concept_id>10010147.10010257.10010282.10010289</concept_id>
       <concept_desc>Computing methodologies~Semi-supervised learning settings</concept_desc>
       <concept_significance>300</concept_significance>
   </concept>
</ccs2012>
\end{CCSXML}

\ccsdesc[500]{Computing methodologies~Machine learning}
\ccsdesc[500]{Computing methodologies~Lifelong machine learning}
\ccsdesc[300]{Computing methodologies~Semi-supervised learning settings}

\keywords{Federated continual learning, Analytic continual learning, Semi-supervised learning}

\maketitle

\section{Introduction}
\begin{figure}[t]
    \centering
    \includegraphics[width=\linewidth]{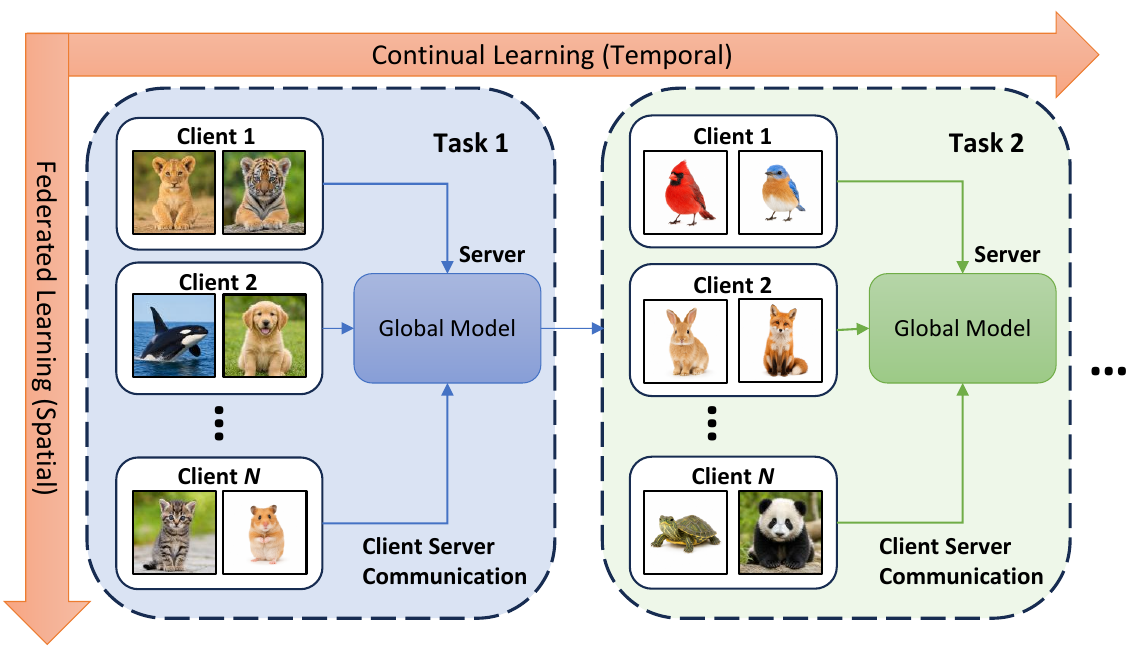}
    \caption{Overview of federated continual learning.}
    \label{fig:fcl_overview}
\end{figure}

\begin{table*}[t!]
\centering
\caption{Comparing FedRAN against representative FCL and analytic-learning families. FedRAN targets the middle ground between full second-order analytic aggregation and first-order/statistic-only communication: it communicates low-rank spectral summaries of random-feature Gram statistics and merges them spatially across clients and temporally across tasks.}
\label{tab:intro_positioning}
\scriptsize
\setlength{\tabcolsep}{3.5pt}
\renewcommand\tabularxcolumn[1]{>{\centering\arraybackslash}m{#1}}
\begin{tabularx}{\textwidth}{XXXXX}
\toprule
\multicolumn{1}{c}{\textbf{Family}} & \multicolumn{1}{c}{\textbf{Client update}} & \multicolumn{1}{c}{\textbf{Feature statistics}} & \multicolumn{1}{c}{\textbf{State / Upload}} & \multicolumn{1}{c}{\textbf{Label scarcity}} \\
\midrule
Gradient, replay, and distillation FCL~\cite{FedWeiT,dong2023no,qi2023better,zhang2023target,bakman2023federated}
& Iterative local training with losses, replay, distillation, or constrained gradients 
& No explicit analytic second-order state 
& Model updates over rounds 
& Mostly supervised \\
\midrule
Prompt / LoRA FCL with pretrained models~\cite{FedCPrompt,Pilora}
& Iterative prompt or adapter training 
& No direct Gram-statistic aggregation 
& Small trainable modules 
& Mostly supervised \\
\midrule
Exact analytic FL/FCL~\cite{fed3r2024,afl2025,afcl2025,stsa}
& Forward-only after feature extraction 
& Full feature-feature Gram and label-feature statistics 
& Quadratic second-order state 
& Mostly supervised \\
\midrule
First-order / estimated-statistic methods~\cite{stsa,fedcof2025,fedncm2023}
& Forward-only or training-free 
& Class means or plug-in Gram estimates 
& Low communication 
& Mostly supervised \\
\midrule
Low-rank random-feature CL~\cite{loranpac2025}
& Centralized continual SVD update 
& Truncated SVD of random features 
& Centralized; no client merge 
& No \\
\midrule
\textbf{FedRAN (Ours)}
& \textbf{Forward-only client statistics}
& \textbf{Low-rank truncated-SVD random-feature Gram and label-feature statistics}
& \textbf{Rank-bounded QR-SVD spatial-temporal merge}
& \textbf{Prototype pseudo-labeling} \\
\bottomrule
\end{tabularx}
\end{table*}

Federated learning (FL) enables multiple clients to collaboratively learn a model while keeping raw data local~\cite{FedAvg, FL_survey}. This privacy-preserving learning paradigm is well-suited to settings where data are distributed across clients and cannot be centralized due to ownership, privacy, or communication constraints. Most FL methods, however, assume a static learning problem: clients optimize a model for a fixed data distribution, and the learned model is then deployed. In many practical deployments, data continue to arrive after deployment, and local client distributions evolve as new classes, domains, or sensing conditions appear~\cite{jothimurugesan2023federated}. Continual learning (CL) studies how models can learn from such non-stationary streams while preserving performance on previously observed tasks~\cite{kirkpatrick2017overcoming,schwarz2018progress,de2021continual}. As shown in Figure~\ref{fig:fcl_overview}, Federated continual learning (FCL) combines these two axes: clients learn from private sequential task streams while a server maintains a global model over all classes observed so far~\cite{FedWeiT, FCL_survey}.

A central challenge in FCL is that practical deployments are constrained not only by privacy and catastrophic forgetting, but also by the resources required to keep learning. Recent benchmarking of resource-constrained FCL shows that existing methods often assume unrestricted training overhead and degrade substantially when memory buffer, computational budget, communication rounds, and label rate are limited~\cite{li2026resource}. This motivates an FCL design that treats communication, computation, memory, and label availability as primary constraints rather than secondary implementation details. Such a method should avoid repeatedly updating large client models, avoid storing or generating old data, remain robust under non-IID client partitions, and still exploit useful information from sparsely labeled streams.

Existing FCL methods address catastrophic forgetting through several mechanisms, but most remain optimization-centric. \cite{FedWeiT} decomposes model parameters into global and task-adaptive components for weighted inter-client transfer. \cite{dong2023no} uses global-local forgetting compensation with class-aware losses and relation distillation. \cite{qi2023better} and \cite{zhang2023target} rely on generative replay or exemplar-free distillation to preserve old knowledge. \cite{bakman2023federated} constrains global updates to be orthogonal to previous-task activation subspaces. More recent pretrained-model approaches, such as \cite{FedCPrompt} and \cite{Pilora}, reduce trainable parameters through prompts or LoRA modules. These methods improve continual accuracy, but they still require iterative local optimization, repeated communication rounds, auxiliary replay or distillation mechanisms, task-specific modules, or full supervision. Moreover, when the global model is updated on non-IID client data, local updates can induce feature drift across clients, making the same client's data map to different representations after other clients update the global model~\cite{venkatesha2022addressing}.

Analytic learning offers a different approach. Instead of repeatedly optimizing model parameters with gradients, analytic methods freeze a feature extractor and update the classifier from feature statistics in closed form. \cite{acil2022} shows that recursive analytic updates can match the performance of joint training in centralized continual learning without storing past samples. \cite{fed3r2024} and \cite{afl2025} bring closed-form ridge classifiers to federated learning, showing that additive statistics can reduce the sensitivity of federated heads to client partitioning. \cite{afcl2025} extends this idea to FCL, replacing gradient-based updates with analytic aggregation over frozen features. \cite{stsa} further shows that feature statistics can be aggregated spatially across clients and temporally across tasks, giving a closed-form classifier update for FCL. These works establish that analytic statistics are a powerful alternative to iterative FCL. However, they expose a key accuracy--resource tradeoff: stronger analytic classifiers depend on second-order feature statistics, but the full Gram matrix grows quadratically with feature dimension.

This tradeoff becomes sharper when using high-dimensional random features. \cite{mcdonnell2023ranpac} shows that frozen nonlinear random projections can significantly improve accuracy by increasing feature separability and enabling second-order prototype decorrelation. \cite{loranpac2025} further shows that random-feature matrices can be ill-conditioned, and that truncated SVD improves stability for centralized continual learning. However, directly applying a random-feature analytic pipeline to a federated setting is expensive: if clients upload full second-order Gram matrices, communication grows as $M^2$ in the random-feature dimension $M$. Communication-efficient alternatives reduce this cost but typically estimate the second-order structure. For example, \cite{stsa} proposes STSA-E, which estimates the global Gram matrix from first-order label-feature statistics and label counts, and uses dummy clients when the number of real clients is small. Such estimators improve communication, but they do not directly preserve the dominant feature-feature directions of the Gram matrix.

We propose \textbf{FedRAN}, a resource-aware analytic framework for FCL. FedRAN keeps the backbone frozen and replaces iterative client training with compact random-feature statistics. 
Each client computes a truncated-SVD summary of its local random-feature Gram matrix. The server then performs a two-level QR-SVD subspace merge. This produces a rank-bounded global spectral state that approximates the dominant second-order geometry without transmitting a full $M\times M$ Gram matrix. 
To address sparse labels, FedRAN further introduces a prototype-based pseudo-labeling variant, FedRAN-SSL, allowing high-confidence unlabeled samples to contribute to the analytic update. FedRAN therefore occupies a distinct point in the FCL design space. As illustrated in Table~\ref{tab:intro_positioning}, compared with gradient-based FCL, it avoids local backpropagation, repeated model exchange, and trainable representation drift. Compared with the analytic FCL, it avoids full second-order communication. Compared with first-order or plug-in statistic methods, it preserves dominant Gram directions rather than estimating the Gram matrix solely from class-wise summaries. Compared with centralized low-rank analytic CL, it introduces the missing federated component: a spatial-temporal QR-SVD merge that turns local low-rank random-feature summaries into a global continual classifier state. The major contributions of our work include:
\begin{itemize}
    \item We propose \textbf{FedRAN}, a resource-aware analytic FCL framework that replaces iterative client training with random-feature statistics and closed-form classifier updates.
    \item We design a two-level QR-SVD subspace merge of low-rank truncated-SVD random-feature Gram summaries, enabling spatial aggregation across non-IID clients and temporal aggregation across class-incremental tasks without transmitting full second-order matrices.
    \item We formulate FedRAN as a subspace-constrained ridge classifier and provide deterministic bounds relating its Gram approximation, classifier approximation, and prediction-score stability to the retained spectral subspace.
    \item We extend FedRAN to label-scarce streams using prototype-based pseudo-labeling, enabling confident unlabeled samples to contribute to the analytic label-feature statistic.
    \item We evaluate FedRAN on CIFAR-100, ImageNet-R, and VTAB datasets under non-IID FCL settings. FedRAN improves average accuracy by up to 4.8 percentage points over the strongest baseline, uses 30.6--121.8$\times$ less per-client communication than representative optimization-based FCL, and is 190.3$\times$ faster on average than gradient-based baselines; with only 20\% labels, pseudo-labeling improves average accuracy by up to 6.61 points.
\end{itemize}

\begin{figure*}[t!]
    \centering
    \includegraphics[width=\textwidth]{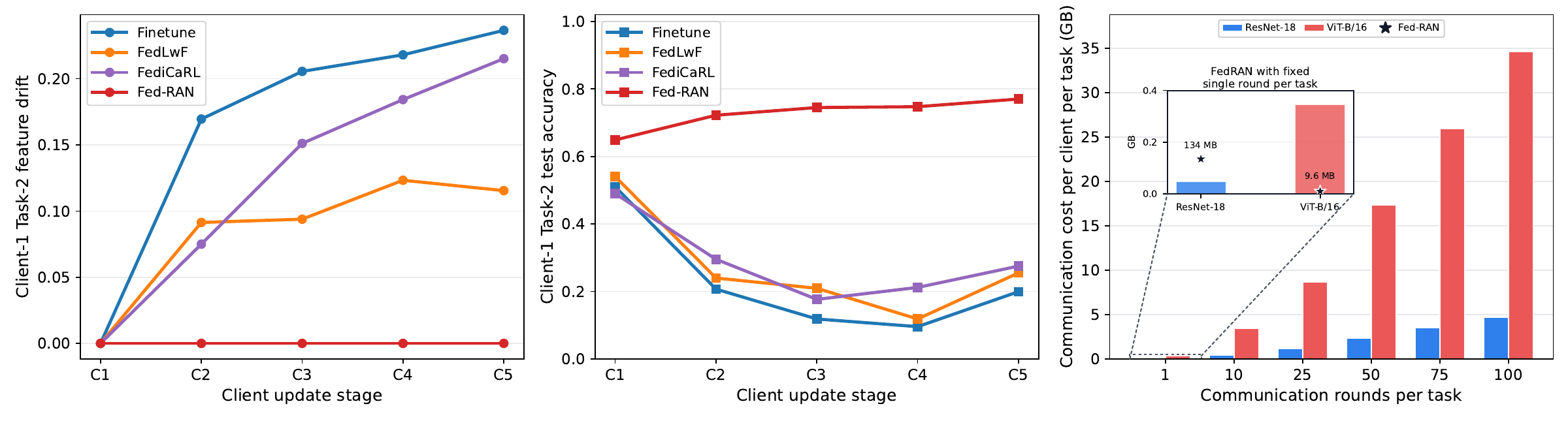}
    \caption{
    Motivation for resource-constrained FCL.
    Left and middle: sequential non-IID client updates of trainable baselines increase feature drift on a fixed Client-1 Task-2 reference set and reduce test accuracy, while FedRAN keeps the representation fixed.
    Right: per-client communication for iterative methods grows with communication rounds and exchanged model size, whereas FedRAN uses a fixed one-shot statistic upload per task; the inset zooms in on this fixed-upload regime.
    }
    \label{fig:motivation}
\end{figure*}

\section{Preliminaries}
\label{sec:preliminaries}

This section introduces the FCL setting, defines the evaluation metrics, and formalizes the resource constraints and problem setting for this work.

\subsection{Class-Incremental FCL Setting}
\label{subsec:prelim_fcl}

We consider $K$ clients learning over a sequence of tasks $\{\mathcal{T}_t\}_{t=1}^{T}$ in a class-incremental FCL setting. At task $t$, client $k$ has a private local stream $\mathcal{D}_{k,t}=\mathcal{D}^{\ell}_{k,t}\cup\mathcal{D}^{u}_{k,t}$, where $\mathcal{D}^{\ell}_{k,t}=\{(x_{k,t,i},y_{k,t,i})\}_{i=1}^{n^{\ell}_{k,t}}$ is the labeled subset and $\mathcal{D}^{u}_{k,t}=\{x_{k,t,i}\}_{i=1}^{n^{u}_{k,t}}$ is the unlabeled subset; in the fully supervised setting $n^{u}_{k,t}=0$. Raw samples remain on the client and are never centralized.

Each task introduces a new set of classes $\mathcal{C}_{t}^{\mathrm{new}}$, and $\mathcal{C}_{1:t}=\bigcup_{\tau=1}^{t}\mathcal{C}_{\tau}^{\mathrm{new}}$ represents all classes observed up to task $t$, with $C_t=|\mathcal{C}_{1:t}|$. After completing task $t$, the global predictor must classify samples from any class in $\mathcal{C}_{1:t}$ without receiving the task identity at inference time. We use $X_{k,t}$ to denote the local inputs available to client $k$ at task $t$. We use
$Y_{k,t}\in\mathbb{R}^{\tilde n_{k,t}\times C_t}$ to denote the one-hot label matrix used by the learning algorithm, where $\tilde n_{k,t}$ is the number of labeled or accepted pseudo-labeled samples used in the update. When new classes arrive, previously accumulated label-dependent matrices are padded with zero columns so that their width matches $C_t$. Clients may have different sample counts, class coverage, and label availability. We simulate label-skewed non-IID partitions using a Dirichlet distribution with concentration parameter $\beta$, where smaller $\beta$ corresponds to stronger class imbalance across clients~\cite{hsu2019measuring}.

\subsection{Accuracy Metrics}
\label{subsec:prelim_accuracy}

Let $\psi_t$ denote the global predictor after completing task $t$, and let $\mathcal{E}_{1:t}$ denote the test set over all classes observed up to task $t$. The accuracy after task $t$ is $A_t=\frac{1}{|\mathcal{E}_{1:t}|}\sum_{(x,y)\in\mathcal{E}_{1:t}}\mathbf{1}\{\psi_t(x)=y\}$. We report the final accuracy $A_T$ and the average accuracy $A_{\mathrm{avg}}=\frac{1}{T}\sum_{t=1}^{T} A_t$, which measures performance across the full task stream.

\subsection{Resource Constraints and Problem Setting}
\label{subsec:prelim_resources}

We characterize each FCL algorithm $\mathcal{A}$ using communication, computation, label rate, and feature drift.

\noindent\textbf{Communication.}
Communication is the maximum upload by any client for a single task, accumulated over all rounds used for that task: $\mathrm{Comm}(\mathcal{A})=\max_{k,t}\,\mathrm{bytes}^{\mathrm{upload}}_{k,t}(\mathcal{A})$.

\noindent\textbf{Computation.}
Computation is measured by the wall-clock time required to incorporate a task. Let
$T^{\mathrm{client}}_{k,t}(\mathcal{A})$ be the local model update time for client $k$ at task $t$, and let
$T^{\mathrm{server}}_{t}(\mathcal{A})$ be the corresponding server-side aggregation and update time. We define
\begin{equation}
    \mathrm{Time}(\mathcal{A})
    =
    \frac{1}{T}
    \sum_{t=1}^{T}
    \left(
    \frac{1}{K}
    \sum_{k=1}^{K}
    T^{\mathrm{client}}_{k,t}(\mathcal{A})
    +
    T^{\mathrm{server}}_{t}(\mathcal{A})
    \right).
    \label{eq:time_metric}
\end{equation}

\noindent\textbf{Label rate.}
For label-scarce streams, the local label rate is $\rho_{k,t}=|\mathcal{D}^{\ell}_{k,t}|/(|\mathcal{D}^{\ell}_{k,t}|+|\mathcal{D}^{u}_{k,t}|)$; a smaller $\rho_{k,t}$ means that fewer local samples have ground-truth labels at task $t$.

\noindent\textbf{Feature drift.}
For methods that update a shared feature extractor, feature drift measures how much the representation of a fixed reference set changes after later updates~\cite{venkatesha2022addressing}. Let $g^{(s)}(\cdot)$ be the feature extractor after update stage $s$, and let $g^{(0)}(\cdot)$ be the reference extractor. For a reference set $\mathcal{R}_{k,\tau}$ from client $k$ and task $\tau$, we define
\begin{equation}
    \mathrm{Drift}_{k,\tau}(s)
    =
    \frac{1}{|\mathcal{R}_{k,\tau}|}
    \sum_{x\in\mathcal{R}_{k,\tau}}
    \|g^{(s)}(x)-g^{(0)}(x)\|_2.
    \label{eq:feature_drift}
\end{equation}
For methods with a fixed feature extractor, this quantity is zero.

\noindent\textbf{Problem setting.}
Given client streams $\{\mathcal{D}_{k,t}\}_{k=1,t=1}^{K,T}$ and label rates $\{\rho_{k,t}\}$, an FCL algorithm must output a predictor $\psi_t$ after each task while keeping raw data local and performing inference over $\mathcal{C}_{1:t}$ without task identity. We evaluate methods by their average and final accuracy, $A_{\mathrm{avg}}$ and $A_T$, together with the resource constraints above. A target deployment may impose budgets such as $\Gamma_{\mathrm{comm}}$, $\Gamma_{\mathrm{time}}$, and $\Gamma_{\mathrm{drift}}$; within such budgets, the goal is to maintain high continual accuracy with low communication, fast task updates, stable representations, and effective use of the available labels.
\section{Motivation}
\label{sec:motivation}

Federated continual learning must update a global classifier using private, non-IID task streams under tight limits on communication, computation, and supervision, while keeping the learned representation stable. Recent resource-constrained FCL benchmarks show that existing methods degrades when computational budget and label rate are restricted~\cite{li2026resource}. We examine four aspects that shape FedRAN's design: the cost of repeated updates, representation stability, retaining second-order information under a communication budget, and learning under sparse labels.

\noindent\textbf{Repeated update cost.}
Most FCL methods remain optimization-centric: clients repeatedly adapt a model, prompt, adapter, or auxiliary component, and server aggregates these updates over communication rounds~\cite{FedAvg,FedWeiT,dong2023no,FedCPrompt,Pilora}. This couples communication to the number of rounds and exchanged parameters, while local computation grows with backpropagation, replay, distillation, or generation mechanisms~\cite{qi2023better,zhang2023target,bakman2023federated}. The right panel of Fig.~\ref{fig:motivation} shows this scaling for ResNet-18 and ViT-B/16. This motivates replacing repeated local optimization with forward-only statistic construction.

\noindent\textbf{Representation stability.}
Updating a shared trainable backbone on the server using the non-IID client streams can also move the feature space itself. Later client updates may change the representation of earlier clients' data even when their raw data are unchanged~\cite{venkatesha2022addressing}. The left/middle panels of Fig.~\ref{fig:motivation} show this effect: as feature drift increases on Client-1 Task-2 data, the corresponding test accuracy of optimization-based baselines drops. This motivates freezing the backbone and performing continual adaptation via classifier state.

\noindent\textbf{Second-order information under a budget.}
Analytic FCL avoids iterative training by aggregating feature statistics and solving the classifier in closed form~\cite{fed3r2024,afcl2025,stsa}. However, with high-dimensional random features, useful second-order information is captured by an $M\times M$ feature-feature Gram matrix, whose cost is quadratic in the random-feature dimension $M$~\cite{mcdonnell2023ranpac,loranpac2025}. A cheaper first-order route estimates this Gram matrix from class-wise feature sums and counts, but the second-order structure then becomes a statistical estimate: its variance grows quadratically with the task sample count, is sensitive to small client counts, and is unbiased only under an i.i.d.\ assumption~\cite{stsa}. FedRAN instead transmits a rank-$r$ summary of the actual second-order statistic, so its approximation error is a deterministic quantity that additional communication can directly reduce, rather than an estimation variance inherent to reconstructing second-order structure from first-order statistics~(Theorem~\ref{thm:gram_error}).

\noindent\textbf{Sparse labels.}
Resource constraints also include supervision. Many FCL methods assume fully labeled client streams, but labels may be delayed, expensive, or unavailable in deployment. If analytic updates use only labeled samples, unlabeled data cannot contribute to the label-feature statistic. Federated semi-supervised learning studies this sparse-label regime~\cite{fedmatch2021}, and prototype-based federated learning provides a lightweight way to summarize class structure in feature space~\cite{fedproto2022}. Since FedRAN keeps the backbone fixed, prototype-based pseudo-labeling is a natural way to use confident unlabeled samples without updating the backbone.

\noindent\textbf{Design implication.}
These aspects define the design target for FedRAN: client updates should be forward-only, the backbone should remain fixed, second-order information should be retained without full Gram matrix transmission, and unlabeled samples should be usable when labels are scarce. FedRAN addresses this through compact random-feature statistics, spatial-temporal QR-SVD aggregation, and prototype-based pseudo-labeling.


\section{System Design}
\label{sec:system}

\begin{figure*}
    \centering
    \includegraphics[width=0.9\linewidth]{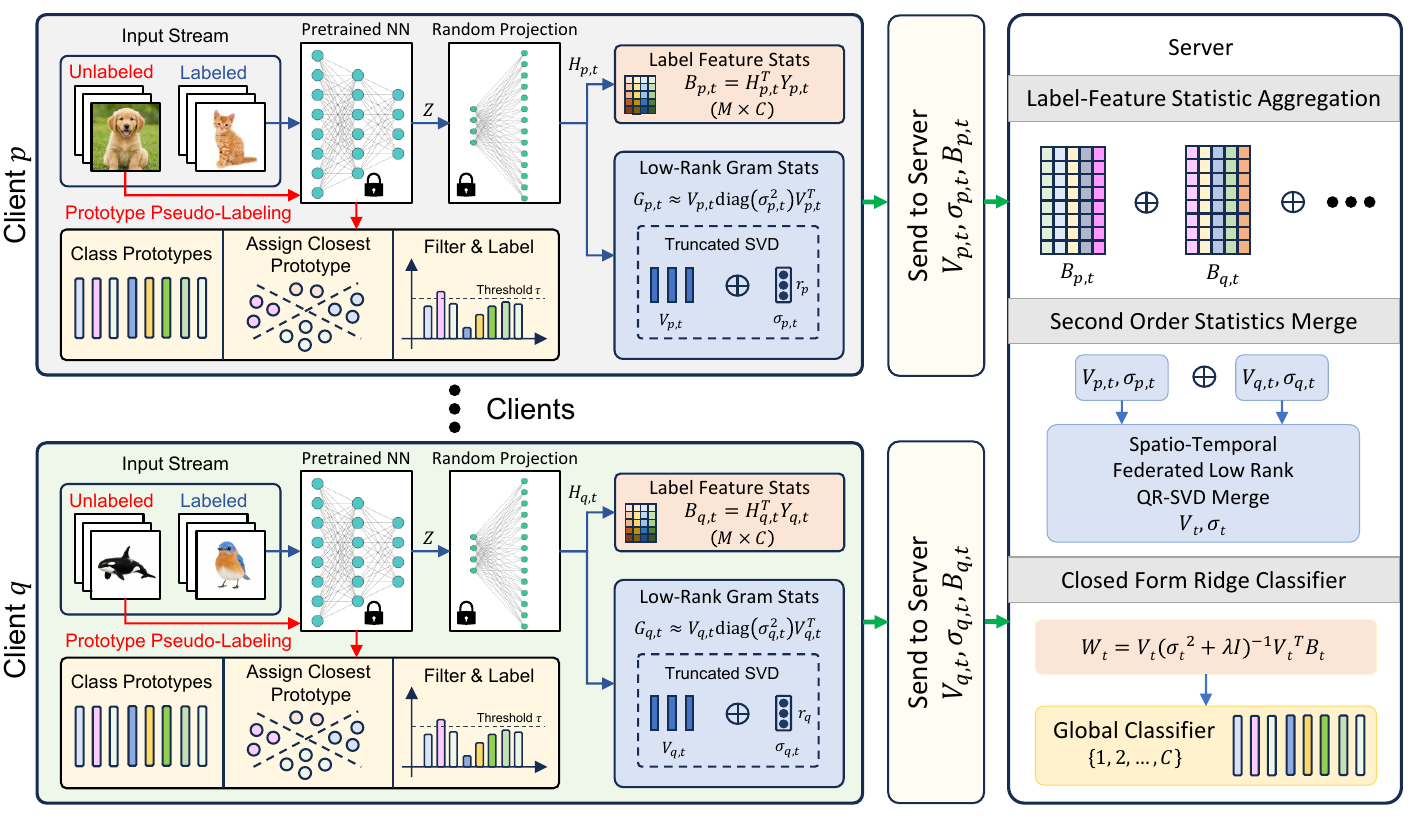}
    \caption{Overview of the proposed FedRAN system. Each client extracts frozen pretrained features, optionally assigns pseudo-labels to confident unlabeled samples, applies a fixed random projection, and computes compact local statistics. The server aggregates label-feature statistics exactly, merges low-rank feature-feature summaries using a two-level QR-SVD update, and obtains the classifier through a closed-form ridge solve in the retained subspace.}
    \label{fig:fedran_methodology}
\end{figure*}

FedRAN is a resource-aware federated analytic continual learning system for the setting defined in Section~\ref{sec:preliminaries}. At each task, clients keep raw data local, compute compact random-feature statistics, and communicate these statistics to the server. The server maintains exact class-wise label-feature statistics, merges low-rank spectral summaries of feature-feature statistics, and computes the classifier through a closed-form analytic update.

\noindent\textbf{Setting.}
We use the spatial-temporal notation from Section~\ref{subsec:prelim_fcl}: at task $t$, client $k$ holds local data $\mathcal{D}_{k,t}$, the current class set has size $C_t$, and $Y_{k,t}\in\mathbb{R}^{\tilde n_{k,t}\times C_t}$ denotes the one-hot label matrix used by the algorithm. FedRAN proceeds in four steps. First, each client maps local samples to a fixed nonlinear random-feature space. Second, the client computes label-feature statistic and a low-rank spectral summary of its feature-feature statistic. Third, the server merges these summaries spatially across clients and temporally across tasks. Finally, the server solves a ridge classifier in closed form.

\noindent\textbf{Analytic reference objective.}
Let $H_{k,t}\in\mathbb{R}^{\tilde n_{k,t}\times M}$ denote the random-feature matrix produced by client $k$ at task $t$, and let $H_{1:t}$ and $Y_{1:t}$ denote the row-wise concatenation of all random features and labels from clients $1{:}K$ and tasks $1{:}t$. If all random features were centralized, the full analytic classifier would solve
\begin{equation}
    W^{\star}_{1:t}
    =
    \arg\min_{W\in\mathbb{R}^{M\times C_t}}
    \|H_{1:t}W-Y_{1:t}\|_F^2+\lambda\|W\|_F^2,
    \label{eq:full_ridge_objective}
\end{equation}
with closed-form solution
\begin{equation}
    W^{\star}_{1:t}
    =
    \left(G^{\star}_{1:t}+\lambda I_M\right)^{-1}B_{1:t},
    \qquad
    G^{\star}_{1:t}=H_{1:t}^{\top}H_{1:t},\quad
    B_{1:t}=H_{1:t}^{\top}Y_{1:t}.
    \label{eq:full_ridge_solution}
\end{equation}
Here, $G^{\star}_{1:t}$ is the full feature-feature Gram statistic and $B_{1:t}$ is the label-feature statistic. FedRAN keeps $B_{1:t}$ exact, but avoids transmitting the full $M\times M$ Gram matrix by replacing $G^{\star}_{1:t}$ with a merged low-rank spectral summary.

\begin{algorithm}[t]
\caption{FedRAN: Federated Analytic Continual Learning}
\label{alg:fedran}
\begin{algorithmic}[1]
\Require Clients $k\in\{1,\ldots,K\}$, task stream $\{\mathcal{T}_t\}_{t=1}^{T}$, frozen backbone $f(\cdot)$ with feature size $d$, projection $P\in\mathbb{R}^{d\times M}$, retained rank $r$, ridge parameter $\lambda$, SSL threshold $\tau$
\State Server initializes $(V_{1:0},\sigma_{1:0})\gets\emptyset$ and $B_{1:0}\gets\emptyset$
\For{$t=1,\ldots,T$}
    \State Let $C_t$ be the number of classes observed up to task $t$
    \Statex \textbf{Client-side local summarization}
    \For{each client $k$ in parallel}
        \State Extract frozen features for labeled and unlabeled data: $Z^{\ell}_{k,t}=f(X^{\ell}_{k,t})$, $Z^{u}_{k,t}=f(X^{u}_{k,t})$
        \State Optionally augment labels using prototype pseudo-labeling: $(\widetilde Z_{k,t},\widetilde y_{k,t})\gets\mathrm{PseudoLabel}(Z^{\ell}_{k,t},y^{\ell}_{k,t},Z^{u}_{k,t},\tau)$
        \State Compute random features $H_{k,t}\gets\mathrm{ReLU}(\widetilde Z_{k,t}P)$, where $H_{k,t}\in\mathbb{R}^{\tilde n_{k,t}\times M}$
        \State Form one-hot labels $Y_{k,t}\in\mathbb{R}^{\tilde n_{k,t}\times C_t}$ from $\widetilde y_{k,t}$
        \State Compute truncated SVD summary $(V_{k,t},\sigma_{k,t})\gets\mathrm{ClientSVD}(H_{k,t},r)$
        \State Compute label-feature statistic $B_{k,t}\gets H_{k,t}^{\top}Y_{k,t}$
        \State Send $(V_{k,t},\sigma_{k,t},B_{k,t})$ to the server
    \EndFor

    \Statex \textbf{Server-side spatial-temporal aggregation}
    \State Initialize task spectral summary $(V_t,\sigma_t)\gets\emptyset$ and task label statistic $B_t\gets 0\in\mathbb{R}^{M\times C_t}$
    \For{each received summary $(V_{k,t},\sigma_{k,t},B_{k,t})$}
        \State $(V_t,\sigma_t)\gets\mathrm{MergeSVD}((V_t,\sigma_t),(V_{k,t},\sigma_{k,t}),r)$
        \State $B_t\gets B_t+B_{k,t}$
    \EndFor
    \State Pad $B_{1:t-1}$ with zero columns if needed, and set $B_{1:t}\gets B_{1:t-1}+B_t$
    \State $(V_{1:t},\sigma_{1:t})\gets\mathrm{MergeSVD}((V_{1:t-1},\sigma_{1:t-1}),(V_t,\sigma_t),r)$

    \Statex \textbf{Analytic classifier update}
    \State $\widetilde W_{1:t}\gets\left(\mathrm{diag}(\sigma_{1:t}^{2})+\lambda I_r\right)^{-1}V_{1:t}^{\top}B_{1:t}$
    \State $W_{1:t}\gets V_{1:t}\widetilde W_{1:t}$
\EndFor
\State \Return Predictor $x\mapsto\arg\max_{c\in\{1,\ldots,C_T\}}\left[\mathrm{ReLU}(f(x)P)W_{1:T}\right]_c$
\end{algorithmic}
\end{algorithm}

\subsection{System Overview}
\label{subsec:system_overview}

Figure~\ref{fig:fedran_methodology} illustrates the FedRAN system. During each task, clients receive local data while keeping raw samples on-device. Each client first extracts features using a frozen pretrained backbone, applies a shared random projection followed by a ReLU nonlinearity, and constructs local random-feature statistics. Under label scarcity, FedRAN-SSL assigns pseudo-labels to confident unlabeled samples using prototype-based cosine similarity before computing the label-feature statistic.

\noindent\textbf{What the server needs.}
Analytic ridge classification requires two sufficient statistics: a feature-feature statistic $G=H^{\top}H$ and a label-feature statistic $B=H^{\top}Y$. These statistics are additive across clients and tasks when the feature map is fixed, which makes them naturally suited to federated continual learning. However, the full Gram matrix has size $M\times M$, which becomes prohibitive when the random-feature dimension $M$ is large.

\noindent\textbf{What FedRAN communicates.}
Instead of transmitting the full local Gram matrix $G_{k,t}=H_{k,t}^{\top}H_{k,t}$, each client computes a truncated SVD of $H_{k,t}$ and sends a low-rank spectral summary $(V_{k,t},\sigma_{k,t})$ together with the exact label-feature statistic $B_{k,t}=H_{k,t}^{\top}Y_{k,t}$. The server merges the spectral summaries spatially across clients and temporally across tasks using a QR-SVD update, producing a bounded global subspace $(V_{1:t},\sigma_{1:t})$. The final classifier is then solved analytically in this retained subspace.

\subsection{Client-Side Feature Extraction}
\label{subsec:client_feature_extraction}

\noindent\textbf{Frozen feature extraction.}
For task $t$, client $k$ receives local samples $X_{k,t}$. FedRAN uses a frozen pretrained backbone $f(\cdot)$ to extract features
\begin{equation}
    Z_{k,t}=f(X_{k,t}),
    \qquad
    Z_{k,t}\in\mathbb{R}^{n_{k,t}\times d},
    \label{eq:backbone_features}
\end{equation}
where $d$ is the backbone feature dimension. The backbone remains fixed throughout the task stream; clients do not update $f(\cdot)$ during federated learning. This avoids client-induced feature drift from repeated updates of a shared feature extractor~\cite{venkatesha2022addressing} and removes client-side backpropagation through the backbone.

\noindent\textbf{Shared random-feature map.}
After feature extraction, each client applies a shared random projection followed by ReLU:
\begin{equation}
    H_{k,t}=\mathrm{ReLU}(Z_{k,t}P),
    \qquad
    P\in\mathbb{R}^{d\times M},\quad
    H_{k,t}\in\mathbb{R}^{n_{k,t}\times M}.
    \label{eq:random_features}
\end{equation}
The projection matrix $P$ is generated once and shared across clients and tasks, so all clients compute statistics over aligned random-feature dimensions. The random projection expands features from $d$ to $M$ dimensions without introducing trainable client-side parameters. The ReLU nonlinearity creates nonlinear random-feature interactions that improve linear separability for the downstream analytic classifier~\cite{rahimi2007random,mcdonnell2023ranpac}.

\noindent\textbf{Client computation.}
For task $t$, client $k$ performs one forward pass through the frozen backbone, a matrix multiplication $Z_{k,t}P$ with cost $O(n_{k,t}dM)$, and a ReLU activation over $n_{k,t}M$ entries. Unlike gradient-based FCL, clients do not perform multiple local epochs, compute backbone gradients, or transmit model updates.

\subsection{Client-Side Low-Rank Statistical Summarization}
\label{subsec:client_summarization}

\noindent\textbf{Exact local statistics.}
Given the random features $H_{k,t}$ and labels $Y_{k,t}$, client $k$ can form the exact local feature-feature and label-feature statistics
\begin{equation}
    G_{k,t}=H_{k,t}^{\top}H_{k,t}\in\mathbb{R}^{M\times M},
    \qquad
    B_{k,t}=H_{k,t}^{\top}Y_{k,t}\in\mathbb{R}^{M\times C_t}.
    \label{eq:local_stats}
\end{equation}
The $c$-th column of $B_{k,t}$ is the sum of random features assigned to class $c$ at client $k$ and task $t$. Thus, $B_{k,t}$ is an unnormalized class-wise prototype statistic over random features.

\begin{fedranprop}[Exact spatial-temporal additivity]
\label{prop:additivity}
Let $H_{1:t}$ and $Y_{1:t}$ denote the row-wise concatenation of all random features and labels from clients $1{:}K$ and tasks $1{:}t$. Then
\begin{equation}
    G^{\star}_{1:t}=H_{1:t}^{\top}H_{1:t}
    =\sum_{\tau=1}^{t}\sum_{k=1}^{K}G_{k,\tau},
    \qquad
    B_{1:t}=H_{1:t}^{\top}Y_{1:t}
    =\sum_{\tau=1}^{t}\sum_{k=1}^{K}B_{k,\tau}.
    \label{eq:additive_stats}
\end{equation}
Consequently, if the full $G_{k,t}$ and $B_{k,t}$ were uploaded, the server would recover the same ridge solution as centralized training on all data seen up to task $t$.
\end{fedranprop}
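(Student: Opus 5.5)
The plan is to use the block structure of the concatenated matrices, and then plug the resulting sums into the closed-form ridge solution \eqref{eq:full_ridge_solution}.

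\textbf{Block decomposition.} Stack the rows of $H_{1:t}$ as blocks $H_{k,\tau}$, one for each pair $(k,\tau)$ with $k\in\{1,\ldots,K\}$ and $\tau\in\{1,\ldots,t\}$, in any fixed order. Stack $Y_{1:t}$ as the matching blocks $Y_{k,\tau}$, in the same order. Each $Y_{k,\tau}$ is first zero-padded to width $C_t$, following the convention of Section~\ref{subsec:prelim_fcl}.

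\textbf{Gram statistic.} For block matrices $A=[A_1;\dots;A_m]$ and $D=[D_1;\dots;D_m]$ with compatible row partitions, block multiplication gives
\[
A^\top D=\sum_{j=1}^{m} A_j^\top D_j.
\]
Taking $A=D=H_{1:t}$ yields $G^\star_{1:t}=\sum_{\tau,k}H_{k,\tau}^\top H_{k,\tau}=\sum_{\tau,k}G_{k,\tau}$.

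\textbf{Label statistic.} Taking $A=H_{1:t}$ and $D=Y_{1:t}$ yields $B_{1:t}=\sum_{\tau,k}H_{k,\tau}^\top Y_{k,\tau}$.

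\textbf{The main subtlety: class padding.} The client statistic $B_{k,\tau}$ was formed at task $\tau$ with only $C_\tau\le C_t$ columns. I will check that zero-padding it to $C_t$ columns gives exactly $H_{k,\tau}^\top$ times the padded one-hot matrix. This holds because padding acts column-wise: $H^\top[Y,0]=[H^\top Y,0]$. Consequently, the recursive update $B_{1:t}=\mathrm{pad}(B_{1:t-1})+B_t$ in Algorithm~\ref{alg:fedran} matches the closed-form sum.

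The one remaining requirement is that the feature map is the same for every block. This holds because the backbone $f$ and the projection $P$ are fixed and shared across all clients and tasks, so each $H_{k,\tau}$ is exactly the corresponding row block of the centralized $H_{1:t}$.

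\textbf{Consequence for the ridge solution.} Suppose the server receives the full $G_{k,\tau}$ and $B_{k,\tau}$. Summing them reproduces exactly $G^\star_{1:t}$ and $B_{1:t}$. The matrix $G^\star_{1:t}+\lambda I_M$ is positive definite for $\lambda>0$, hence invertible. The server's solution $(\sum G_{k,\tau}+\lambda I_M)^{-1}\sum B_{k,\tau}$ therefore equals $W^\star_{1:t}$ from \eqref{eq:full_ridge_solution}, which is the unique minimizer of the centralized objective \eqref{eq:full_ridge_objective}.

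The argument is routine throughout; the only point needing care is the padding consistency for $B$.
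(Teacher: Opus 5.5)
Your proposal is correct and takes essentially the same approach as the paper: block multiplication of the row-wise concatenation yields the per-block sums for both $G^{\star}_{1:t}$ and $B_{1:t}$, and substituting these sums into the closed-form ridge solution gives the consequence. Your zero-padding consistency check for $B_{k,\tau}$ and your invertibility remark for $G^{\star}_{1:t}+\lambda I_M$ are details the paper's proof leaves implicit.
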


\noindent\textbf{Why low-rank summaries are needed.}
Although $G_{k,t}$ is additive, transmitting it is expensive: its communication cost is $M^2$ floating-point values per client per task. For $M=10{,}000$, this is $10^8$ values, or roughly $400$ MB in fp32. FedRAN therefore keeps $B_{k,t}$ exact but compresses $G_{k,t}$ through the dominant singular directions of $H_{k,t}$.

\noindent\textbf{Truncated SVD summary.}
Each client computes a rank-$r_{k,t}$ truncated SVD of its random-feature matrix,
\begin{equation}
    H_{k,t}\approx U_{k,t}\mathrm{diag}(\sigma_{k,t})V_{k,t}^{\top},
    \qquad
    V_{k,t}\in\mathbb{R}^{M\times r_{k,t}},\quad
    \sigma_{k,t}\in\mathbb{R}^{r_{k,t}},
    \label{eq:client_svd}
\end{equation}
where $r_{k,t}\le \min\{r,n_{k,t},M\}$. This induces a rank-$r_{k,t}$ approximation to the local Gram matrix:
\begin{equation}
    \widetilde G_{k,t}
    =
    V_{k,t}\mathrm{diag}(\sigma_{k,t}^{2})V_{k,t}^{\top}
    \approx
    H_{k,t}^{\top}H_{k,t}=G_{k,t}.
    \label{eq:local_gram_approx}
\end{equation}
The client sends $(V_{k,t},\sigma_{k,t},B_{k,t})$ to the server. The upload cost becomes
\begin{equation}
    Mr_{k,t}+r_{k,t}+MC_t
    \label{eq:client_upload_cost}
\end{equation}
values instead of $M^2+MC_t$. For a per-message budget $\mathcal{B}$ bytes and $b$ bytes per scalar, a feasible rank satisfies
\begin{equation}
    r_{k,t}
    \le
    \left\lfloor
    \frac{\mathcal{B}/b-MC_t}{M+1}
    \right\rfloor.
    \label{eq:budget_rank}
\end{equation}

\subsection{Prototype-Based Pseudo-Labeling}
\label{subsec:pseudo_labeling}

\noindent\textbf{Need for pseudo-labeling.}
The statistic $B_{k,t}=H_{k,t}^{\top}Y_{k,t}$ requires labels. In resource-constrained FCL, labels may be sparse, delayed, or unavailable for a large fraction of incoming client samples~\cite{li2026resource}. If only labeled data are used, the analytic update may under-utilize the local stream. FedRAN-SSL, therefore, assigns pseudo-labels to high-confidence unlabeled samples before constructing $B_{k,t}$.

\noindent\textbf{Local prototypes in frozen feature space.}
Let $Z^{\ell}_{k,t}$ and $y^{\ell}_{k,t}$ denote the labeled backbone features and labels at client $k$, and let $Z^{u}_{k,t}$ denote unlabeled backbone features. For each class $c$ present in the labeled subset, the client computes
\begin{equation}
    p_{k,t,c}
    =
    \frac{1}{|\mathcal{I}_{k,t,c}|}
    \sum_{i\in\mathcal{I}_{k,t,c}} Z^{\ell}_{k,t}[i],
    \qquad
    \mathcal{I}_{k,t,c}=\{i:y^{\ell}_{k,t}[i]=c\},
    \label{eq:ssl_proto}
\end{equation}
followed by $\ell_2$ normalization $p_{k,t,c}\leftarrow p_{k,t,c}/\|p_{k,t,c}\|_2$.

\noindent\textbf{Confidence filtering.}
For each unlabeled feature $u\in Z^{u}_{k,t}$, FedRAN computes $\bar u=u/\|u\|_2$ and assigns the candidate pseudo-label
\begin{equation}
    c^{\star}(u)=\arg\max_{c}\ \bar u^{\top}p_{k,t,c}.
    \label{eq:ssl_candidate}
\end{equation}
The pseudo-label is accepted only if
\begin{equation}
    \bar u^{\top}p_{k,t,c^{\star}(u)}\ge \tau,
    \label{eq:ssl_threshold}
\end{equation}
where $\tau$ is a confidence threshold. Accepted pseudo-labeled samples are combined with labeled samples to form $(\widetilde Z_{k,t},\widetilde y_{k,t})$, and the corresponding random features and one-hot labels are then used in Eq.~\eqref{eq:random_features} and Eq.~\eqref{eq:local_stats}. The theoretical statements below condition on the label matrix actually used by the algorithm; when SSL is enabled, this is the augmented label matrix after confidence filtering.

\subsection{Server-Side QR-SVD Aggregation}
\label{subsec:server_qr_svd}

\noindent\textbf{Two-level aggregation.}
The server maintains two global objects: the exact accumulated label-feature statistic $B_{1:t}$ and the low-rank spectral summary $(V_{1:t},\sigma_{1:t})$ of the accumulated Gram matrix. Aggregation occurs in two levels. First, client summaries from the current task are merged spatially into $(V_t,\sigma_t)$. Second, this task-level summary is merged temporally into the global summary $(V_{1:t},\sigma_{1:t})$.

\noindent\textbf{QR-SVD merge.}
Consider merging two spectral summaries $(V_a,\sigma_a)$ and $(V_b,\sigma_b)$, where $V_a$ and $V_b$ have orthonormal columns and $\sigma_a,\sigma_b$ contain nonnegative singular values in descending order. FedRAN forms
\begin{equation}
    A=
    \left[
        V_a\mathrm{diag}(\sigma_a),\;
        V_b\mathrm{diag}(\sigma_b)
    \right]
    \in\mathbb{R}^{M\times(r_a+r_b)}.
    \label{eq:merge_A}
\end{equation}
The key identity is
\begin{equation}
    AA^{\top}
    =
    V_a\mathrm{diag}(\sigma_a^2)V_a^{\top}
    +
    V_b\mathrm{diag}(\sigma_b^2)V_b^{\top}.
    \label{eq:AA_sum}
\end{equation}
Thus, the covariance of $A$ is exactly the sum of the two input low-rank Gram approximations. FedRAN then computes
\begin{equation}
    A=QR,
    \qquad
    R=U_R\mathrm{diag}(\bar\sigma)W_R^{\top},
    \label{eq:qr_svd}
\end{equation}
keeps the top $r$ singular components, and outputs
\begin{equation}
    V=Q U_R^{(:,1:r)},
    \qquad
    \sigma=\bar\sigma_{1:r}.
    \label{eq:merge_output}
\end{equation}
This operation avoids materializing any $M\times M$ matrix. It operates on $A\in\mathbb{R}^{M\times 2r}$ when both inputs have rank $r$, and therefore costs $O(Mr^2+r^3)$ per merge. The same operation is used for spatial client aggregation and temporal task aggregation. Similar QR-SVD subspace merging ideas appear in federated PCA and streaming subspace tracking~\cite{FedPCA,vrehuuvrek2011subspace,eftekhari2019moses}; FedRAN adapts this mechanism to random-feature analytic continual learning.

\begin{fedranprop}[QR-SVD merge recovers the summed sketch covariance]
\label{prop:qr_svd_covariance}
Let $A$ be defined as in Eq.~\eqref{eq:merge_A}, and let $S=AA^\top$ have eigenvalues $\lambda_1(S)\ge\lambda_2(S)\ge\cdots\ge0$. If no rank truncation is applied after the SVD of $R$, then the merged factors $(V,\sigma)$ satisfy
\begin{equation}
    V\mathrm{diag}(\sigma^2)V^{\top}=AA^{\top}
    =
    V_a\mathrm{diag}(\sigma_a^2)V_a^{\top}
    +
    V_b\mathrm{diag}(\sigma_b^2)V_b^{\top}.
    \label{eq:merge_exact_covariance}
\end{equation}
If only the top $r$ components are retained, then $V\mathrm{diag}(\sigma^2)V^{\top}=\mathcal{T}_r(S)$, the best rank-$r$ approximation to $S$ in both spectral and Frobenius norms. Its residual satisfies
\begin{equation}
    \|S-\mathcal{T}_r(S)\|_2=\lambda_{r+1}(S),
    \qquad
    \|S-\mathcal{T}_r(S)\|_F
    =
    \left(\sum_{i>r}\lambda_i(S)^2\right)^{1/2}.
    \label{eq:merge_residual_norms}
\end{equation}
\end{fedranprop}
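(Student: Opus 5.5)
The plan is to express everything through the thin QR factorization $A=QR$, where $Q\in\mathbb{R}^{M\times p}$ has orthonormal columns and $p=r_a+r_b$, and then read off the eigendecomposition of $S=AA^\top$ from the small SVD of $R$. First, the identity in Eq.~\eqref{eq:AA_sum} follows by block multiplication: $AA^\top=V_a\mathrm{diag}(\sigma_a)\mathrm{diag}(\sigma_a)V_a^\top+V_b\mathrm{diag}(\sigma_b)\mathrm{diag}(\sigma_b)V_b^\top$. Next, I substitute $R=U_R\mathrm{diag}(\bar\sigma)W_R^\top$ and use $W_R^\top W_R=I$ (take the full SVD of the square $p\times p$ matrix $R$) to get
\begin{equation*}
AA^\top = Q U_R\,\mathrm{diag}(\bar\sigma^2)\,U_R^\top Q^\top .
\end{equation*}
With no truncation, $V=QU_R$ and $\sigma=\bar\sigma$, so Eq.~\eqref{eq:merge_exact_covariance} follows immediately.

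For the truncated case, the key observation is that $QU_R$ has orthonormal columns, since $(QU_R)^\top QU_R=U_R^\top Q^\top Q U_R=I_p$. The display above is therefore a genuine eigendecomposition of $S$ restricted to $\mathrm{range}(Q)$. On the orthogonal complement of $\mathrm{range}(Q)$, $S$ acts as zero. Hence the nonzero eigenvalues of $S$ are exactly $\bar\sigma_i^2$, with eigenvectors the columns of $QU_R$, and $\lambda_i(S)=0$ for $i>p$. Keeping the first $r$ columns gives $V\mathrm{diag}(\sigma^2)V^\top=\sum_{i\le r}\lambda_i(S)v_iv_i^\top=\mathcal{T}_r(S)$, the truncated spectral decomposition. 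Because $S$ is symmetric PSD, its eigendecomposition is also an SVD. The Eckart--Young--Mirsky theorem then says this truncation is the best rank-$r$ approximation in both the spectral and Frobenius norms.

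The residual formulas follow directly. The residual is $S-\mathcal{T}_r(S)=\sum_{i>r}\lambda_i(S)v_iv_i^\top$, which is PSD with eigenvalues $\lambda_{r+1}(S)\ge\lambda_{r+2}(S)\ge\cdots$. Its operator norm is therefore $\lambda_{r+1}(S)$, and its Frobenius norm is $(\sum_{i>r}\lambda_i(S)^2)^{1/2}$. Both formulas agree with Eq.~\eqref{eq:merge_residual_norms}, where $\lambda_i(S)=0$ is understood for $i>p$.

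The step that needs care is rank deficiency of $A$, because the QR factorization must still give an orthonormal $Q$. If $A$ has dependent columns, Householder QR still produces an orthonormal $Q$ and a possibly singular upper-triangular $R$, and then some $\bar\sigma_i$ are zero. The argument above goes through unchanged, since it only used $Q^\top Q=I$ and never needed $R$ to be invertible. Ties among the eigenvalues make $\mathcal{T}_r(S)$ non-unique, so the statement should be read as saying the output is a best rank-$r$ approximation. Beyond these caveats, I expect no real obstacle.
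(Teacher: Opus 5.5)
Your proposal is correct and follows the same route as the paper. The block product writes $AA^\top$ as the sum of the two sketch covariances. Substituting $A=QR$ and the SVD of $R$ then gives $AA^\top=QU_R\mathrm{diag}(\bar\sigma^2)U_R^\top Q^\top$, and Eckart--Young--Mirsky supplies optimality and the residual norms. You are in fact more careful than the paper: you check that $QU_R$ has orthonormal columns and so derive $\lambda_i(S)=\bar\sigma_i^2$ (zero for $i>p$), which the paper only posits, and you handle rank-deficient $A$ and eigenvalue ties. Your implicit assumption $M\ge p$, needed for $R$ to be square, is harmless. If $M<p$, the same computation goes through with a thin SVD of the wide factor $R$ and $\min\{M,p\}$ in place of $p$.
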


The proof follows from the QR-SVD factorization and the Eckart--Young--Mirsky theorem; details are given in Appendix~\ref{app:proof_qr_svd}.

\noindent\textbf{Approximation to the true Gram matrix.}
Let
\begin{equation}
    \widetilde G_{1:t}=V_{1:t}\mathrm{diag}(\sigma_{1:t}^{2})V_{1:t}^{\top}
    \label{eq:global_gram_sketch}
\end{equation}
be the final FedRAN Gram sketch after all local SVD summaries and QR-SVD merges up to task $t$. The following theorem makes the approximation error explicit in terms of discarded local singular values and discarded merge eigenvalues.

\begin{fedranthm}[FedRAN Gram approximation error]
\label{thm:gram_error}
Let
\begin{equation}
    G^{\star}_{1:t}
    =
    \sum_{\tau=1}^{t}\sum_{k=1}^{K}G_{k,\tau}
\end{equation}
be the exact accumulated Gram matrix. For each local random-feature matrix $H_{k,\tau}$, let
\[
s_{k,\tau,1}\ge s_{k,\tau,2}\ge\cdots
\]
denote its full singular-value spectrum. For each QR-SVD merge $j$, let $S_j=A_jA_j^\top$ denote the untruncated covariance being merged, with eigenvalues
\[
\lambda_{j,1}\ge\lambda_{j,2}\ge\cdots\ge0.
\]
Then the FedRAN sketch satisfies
\begin{equation}
    \|G^{\star}_{1:t}-\widetilde G_{1:t}\|_2
    \le
    \sum_{\tau=1}^{t}\sum_{k=1}^{K}
    s_{k,\tau,r_{k,\tau}+1}^{2}
    +
    \sum_{j\in\mathcal{M}_t}
    \lambda_{j,r+1}
    \triangleq \varepsilon_G(t),
    \label{eq:gram_bound}
\end{equation}
where $\mathcal{M}_t$ is the set of server-side spatial and temporal QR-SVD merges performed up to task $t$. If the relevant omitted singular value or eigenvalue does not exist, the corresponding term is defined as $0$.
\end{fedranthm}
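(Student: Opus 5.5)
The plan is a telescoping decomposition of the total error into local truncation errors and merge truncation errors, followed by the triangle inequality. Each error is computed exactly from the Eckart--Young--Mirsky theorem and Proposition~\ref{prop:qr_svd_covariance}.

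\textbf{Step 1: local truncations.} For client $k$ at task $\tau$, write the full SVD $H_{k,\tau}=\sum_i s_{k,\tau,i}u_iv_i^\top$, so that
\[
G_{k,\tau}=\sum_i s_{k,\tau,i}^2 v_iv_i^\top .
\]
The local sketch $\widetilde G_{k,\tau}$ keeps the first $r_{k,\tau}$ terms of this sum. Hence $E_{k,\tau}:=G_{k,\tau}-\widetilde G_{k,\tau}$ is positive semidefinite with
\[
\|E_{k,\tau}\|_2=s_{k,\tau,r_{k,\tau}+1}^2,
\]
and this quantity is $0$ if that singular value does not exist.

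\textbf{Step 2: merges.} Let $j$ be any spatial or temporal merge, with inputs whose Gram sketches are $\widetilde G_a$ and $\widetilde G_b$. By Eq.~\eqref{eq:AA_sum}, $S_j=\widetilde G_a+\widetilde G_b$. By Proposition~\ref{prop:qr_svd_covariance}, the output sketch is $\mathcal{T}_r(S_j)$. Define the merge residual $F_j:=S_j-\mathcal{T}_r(S_j)$. Proposition~\ref{prop:qr_svd_covariance} gives
\[
F_j\succeq 0,\qquad \|F_j\|_2=\lambda_{j,r+1}.
\]
When no truncation occurs, $F_j=0$.

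\textbf{Step 3: telescoping.} I would argue by induction along the merge tree defined by Algorithm~\ref{alg:fedran}. The first level is the sequential spatial merges over clients within each task; the second is the temporal merges over tasks. The invariant to maintain is that every intermediate summary $(V,\sigma)$ covering a set of leaves $L$ and a set of merges $J$ satisfies
\[
V\mathrm{diag}(\sigma^2)V^\top=\sum_{(k,\tau)\in L}G_{k,\tau}-\sum_{(k,\tau)\in L}E_{k,\tau}-\sum_{j\in J}F_j .
\]

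The two cases of the induction are as follows.
\begin{itemize}
\item \emph{Base case.} At a leaf the invariant is exactly Step 1.
\item \emph{Inductive step.} Merging two summaries with disjoint leaf and merge sets sums their right-hand sides to give $S_j$. Subtracting $F_j$ then gives the output, and the invariant holds again.
\end{itemize}

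Applying the invariant to the root summary $(V_{1:t},\sigma_{1:t})$, together with Proposition~\ref{prop:additivity}, gives
\[
G^\star_{1:t}-\widetilde G_{1:t}=\sum_{\tau,k}E_{k,\tau}+\sum_{j\in\mathcal{M}_t}F_j .
\]

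Some bookkeeping is needed here. Empty initial summaries make the first merge trivial; I treat such a merge either as contributing $F_j=0$ or as simply not counted. The temporal merge at task $1$ with the empty $(V_{1:0},\sigma_{1:0})$ is handled the same way.

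\textbf{Step 4: conclude.} The triangle inequality for $\|\cdot\|_2$ yields $\varepsilon_G(t)$ directly. Alternatively, since every summand is PSD, the spectral norm of the sum is at most the sum of the spectral norms. Either route is enough.

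\textbf{Main obstacle.} The analysis itself is elementary. The only delicate point is making the telescoping invariant precise over the nested spatial and temporal merge structure, including the padding and empty-summary cases. Once each merge is recognized as being applied to an already-approximated input, so that its residual $F_j$ is measured relative to $S_j$ rather than to the true Gram matrix, the bound follows term by term.
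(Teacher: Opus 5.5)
Your proposal is correct and follows the paper's proof. Both split $G^{\star}_{1:t}-\widetilde G_{1:t}$ into local residuals $G_{k,\tau}-\widetilde G_{k,\tau}$ and merge residuals $S_j-\mathcal{T}_r(S_j)$ as in Eq.~\eqref{eq:gram_error_decomposition}, take their spectral norms $s_{k,\tau,r_{k,\tau}+1}^2$ and $\lambda_{j,r+1}$ from Eckart--Young, and finish with the triangle inequality. Your induction invariant over the merge tree, with empty-operand merges contributing zero, makes rigorous the telescoping step that the paper asserts in one line.
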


Theorem~\ref{thm:gram_error} separates FedRAN's approximation error into two interpretable sources: local truncation at the clients and merge truncation at the server. Increasing the retained rank reduces both terms, while increasing communication and server-side computation. The proof and an equivalent discarded-energy form are given in Appendix~\ref{app:proof_gram_error}.

\subsection{Analytic Classifier Update}
\label{subsec:analytic_update}

\noindent\textbf{Subspace-constrained ridge objective.}
After aggregation, the server has the exact accumulated label-feature statistic $B_{1:t}$ and the low-rank spectral summary $(V_{1:t},\sigma_{1:t})$. A full ridge solve with $G^{\star}_{1:t}\in\mathbb{R}^{M\times M}$ would require materializing and inverting an $M\times M$ matrix. FedRAN instead constrains the classifier to the retained spectral subspace:
\begin{equation}
    W_{1:t}=V_{1:t}\widetilde W_{1:t},
    \qquad
    \widetilde W_{1:t}\in\mathbb{R}^{r\times C_t}.
    \label{eq:subspace_classifier}
\end{equation}
The corresponding subspace ridge objective is
\begin{equation}
    \widetilde W_{1:t}
    =
    \arg\min_{\widetilde W\in\mathbb{R}^{r\times C_t}}
    \|H_{1:t}V_{1:t}\widetilde W-Y_{1:t}\|_F^2
    +\lambda\|\widetilde W\|_F^2.
    \label{eq:subspace_ridge}
\end{equation}
If the exact Gram matrix were available, the closed-form solution of the above objective would be
\begin{equation}
    \widetilde W_{1:t}^{\star}
    =
    \left(V_{1:t}^{\top}G^{\star}_{1:t}V_{1:t}+\lambda I_r\right)^{-1}
    V_{1:t}^{\top}B_{1:t}.
    \label{eq:subspace_ridge_exact}
\end{equation}
FedRAN uses the merged spectral sketch
\begin{equation}
    \widetilde G_{1:t}
    =
    V_{1:t}\mathrm{diag}(\sigma_{1:t}^{2})V_{1:t}^{\top},
\end{equation}
for which $V_{1:t}^{\top}\widetilde G_{1:t}V_{1:t}=\mathrm{diag}(\sigma_{1:t}^{2})$. The classifier update is therefore
\begin{equation}
    \widetilde W_{1:t}
    =
    \left(\mathrm{diag}(\sigma_{1:t}^{2})+\lambda I_r\right)^{-1}
    V_{1:t}^{\top}B_{1:t},
    \qquad
    W_{1:t}=V_{1:t}\widetilde W_{1:t}.
    \label{eq:fedran_classifier}
\end{equation}
This update solves ridge regression in the retained subspace and sets the orthogonal complement of $V_{1:t}$ to zero. As a result, the server only inverts a diagonal matrix rather than the full $M\times M$ matrix.

\noindent\textbf{Weight approximation bound.}
The following theorem quantifies how close the FedRAN classifier is to the full ridge solution in Eq.~\eqref{eq:full_ridge_solution}. The bound has two terms: one from the low-rank Gram sketch and one from restricting the classifier to the retained spectral subspace.

\begin{fedranthm}[Approximation to full ridge]
\label{thm:weight_error}
Let $W^{\star}_{1:t}=(G^{\star}_{1:t}+\lambda I_M)^{-1}B_{1:t}$ be the full ridge solution and let $W_{1:t}$ be the FedRAN classifier in Eq.~\eqref{eq:fedran_classifier}. Let
\begin{equation}
    \widetilde G_{1:t}
    =
    V_{1:t}\mathrm{diag}(\sigma_{1:t}^{2})V_{1:t}^{\top},
\end{equation}
and let $\varepsilon_G(t)$ satisfy Theorem~\ref{thm:gram_error}. Then
\begin{equation}
    \|W^{\star}_{1:t}-W_{1:t}\|_F
    \le
    \frac{\varepsilon_G(t)}{\lambda^2}\|B_{1:t}\|_F
    +
    \frac{1}{\lambda}
    \|(I_M-V_{1:t}V_{1:t}^{\top})B_{1:t}\|_F.
    \label{eq:weight_error_bound}
\end{equation}
\end{fedranthm}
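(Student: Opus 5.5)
The plan is to write the FedRAN classifier as a full-dimensional ridge solve with the sketched Gram matrix $\widetilde G_{1:t}$, plus an explicit correction on the orthogonal complement of $V_{1:t}$. The resulting error then splits into a resolvent-difference term and a projection term.

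\textbf{Step 1: rewrite $W_{1:t}$.} Write $V=V_{1:t}$, $\Sigma^2=\mathrm{diag}(\sigma_{1:t}^2)$ and $P_\perp=I_M-VV^\top$. Since $V$ has orthonormal columns (output of the QR-SVD merge, Proposition~\ref{prop:qr_svd_covariance}), we have
\begin{equation*}
(\widetilde G_{1:t}+\lambda I_M)^{-1}=V(\Sigma^2+\lambda I_r)^{-1}V^\top+\tfrac{1}{\lambda}P_\perp .
\end{equation*}
This can be checked by multiplying by $\widetilde G_{1:t}+\lambda I_M=V(\Sigma^2+\lambda I_r)V^\top+\lambda P_\perp$ and using $V^\top P_\perp=0$. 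Comparing with Eq.~\eqref{eq:fedran_classifier} then gives
\begin{equation*}
W_{1:t}=(\widetilde G_{1:t}+\lambda I_M)^{-1}B_{1:t}-\tfrac{1}{\lambda}P_\perp B_{1:t}.
\end{equation*}

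\textbf{Step 2: split the error.} From Step 1,
\begin{equation*}
W^\star_{1:t}-W_{1:t}=\bigl[(G^\star_{1:t}+\lambda I_M)^{-1}-(\widetilde G_{1:t}+\lambda I_M)^{-1}\bigr]B_{1:t}+\tfrac{1}{\lambda}P_\perp B_{1:t}.
\end{equation*}
By the triangle inequality, the second term contributes exactly $\frac{1}{\lambda}\|(I_M-V_{1:t}V_{1:t}^\top)B_{1:t}\|_F$.

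\textbf{Step 3: bound the first term.} Apply the resolvent identity
\begin{equation*}
X^{-1}-Y^{-1}=X^{-1}(Y-X)Y^{-1},
\end{equation*}
with $X=G^\star_{1:t}+\lambda I_M$ and $Y=\widetilde G_{1:t}+\lambda I_M$. Both $G^\star_{1:t}$ and $\widetilde G_{1:t}$ are PSD, so $\|X^{-1}\|_2,\|Y^{-1}\|_2\le 1/\lambda$. Using $\|AB\|_F\le\|A\|_2\|B\|_F$ and Theorem~\ref{thm:gram_error} for $\|\widetilde G_{1:t}-G^\star_{1:t}\|_2\le\varepsilon_G(t)$, the first term is at most $\varepsilon_G(t)\|B_{1:t}\|_F/\lambda^2$. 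Adding the two contributions gives Eq.~\eqref{eq:weight_error_bound}.

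\textbf{Main obstacle.} The only non-routine point is Step 1. FedRAN's $W_{1:t}$ is not $(\widetilde G_{1:t}+\lambda I)^{-1}B_{1:t}$ itself: it zeroes the component of that solve lying in the complement of $V_{1:t}$. Recognizing that this discrepancy is exactly $\frac1\lambda P_\perp B_{1:t}$ is what produces the second term of the bound. This step relies on $V_{1:t}$ having exactly orthonormal columns, which holds because it is $QU_R^{(:,1:r)}$ with $Q$ and $U_R$ both orthonormal.
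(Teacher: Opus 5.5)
Your proposal is correct and follows essentially the same route as the paper. The paper also introduces the intermediate sketched ridge solution $(\widetilde G_{1:t}+\lambda I_M)^{-1}B_{1:t}$ and bounds its distance to $W^{\star}_{1:t}$ via the inverse-difference identity, with both inverse factors of spectral norm at most $1/\lambda$. It then uses the same decomposition $(\widetilde G_{1:t}+\lambda I_M)^{-1}=V_{1:t}(\mathrm{diag}(\sigma_{1:t}^2)+\lambda I_r)^{-1}V_{1:t}^{\top}+\frac{1}{\lambda}(I_M-V_{1:t}V_{1:t}^{\top})$ to show the remaining discrepancy is exactly $\frac{1}{\lambda}(I_M-V_{1:t}V_{1:t}^{\top})B_{1:t}$; your explicit justification of the orthonormality of $V_{1:t}$ from the QR-SVD construction is a small addition the paper leaves implicit.
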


The first term in Eq.~\eqref{eq:weight_error_bound} is controlled by the spectral error of the Gram sketch. The second term measures the amount of label-feature signal outside the retained subspace. Thus, FedRAN closely approximates the full ridge classifier when the Gram sketch is accurate and $B_{1:t}$ is well aligned with the retained spectral directions. The proof is provided in Appendix~\ref{app:proof_weight_error}.

\begin{fedrancor}[Score and prediction stability]
\label{cor:score_margin}
For a test random feature $h\in\mathbb{R}^{M}$, let $s^{\star}=h^{\top}W^{\star}_{1:t}$ and $s=h^{\top}W_{1:t}$. If $\varepsilon_W(t)$ denotes the right-hand side of Eq.~\eqref{eq:weight_error_bound}, then
\begin{equation}
    \|s^{\star}-s\|_2\le \|h\|_2\varepsilon_W(t).
    \label{eq:score_bound}
\end{equation}
Moreover, if the full ridge classifier has margin
\begin{equation}
    s^{\star}_{y}-\max_{c\ne y}s^{\star}_{c}
    >
    2\|h\|_2\varepsilon_W(t),
    \label{eq:margin_condition}
\end{equation}
then FedRAN predicts the same class as the full ridge classifier for $h$.
\end{fedrancor}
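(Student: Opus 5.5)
The plan is to derive both claims of Corollary~\ref{cor:score_margin} directly from Theorem~\ref{thm:weight_error}, which we take as given.

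\textbf{Score bound.} Write
\[
s^{\star}-s = h^{\top}(W^{\star}_{1:t}-W_{1:t}),
\]
viewed as a row vector in $\mathbb{R}^{C_t}$. Bounding a vector--matrix product by the operator norm gives
\[
\|s^{\star}-s\|_2 \le \|h\|_2\,\|W^{\star}_{1:t}-W_{1:t}\|_2 .
\]
Since the spectral norm never exceeds the Frobenius norm, $\|W^{\star}_{1:t}-W_{1:t}\|_2\le\|W^{\star}_{1:t}-W_{1:t}\|_F$. Theorem~\ref{thm:weight_error} bounds the latter by $\varepsilon_W(t)$, which yields Eq.~\eqref{eq:score_bound}. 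If a sharper justification of the first step is wanted, it is Cauchy--Schwarz applied column by column: each entry satisfies $|s^{\star}_c-s_c|\le\|h\|_2\,\|(W^{\star}_{1:t}-W_{1:t})_{:,c}\|_2$, and summing the squares over $c$ reproduces the Frobenius norm.

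\textbf{Prediction stability.} Let $\delta=\|h\|_2\varepsilon_W(t)$. By the score bound, $|s_c-s^{\star}_c|\le\|s-s^{\star}\|_2\le\delta$ for every class $c$. Then for every $c\ne y$,
\[
s_y-s_c \;\ge\; (s^{\star}_y-\delta)-(s^{\star}_c+\delta) \;=\; s^{\star}_y-s^{\star}_c-2\delta \;>\; 0,
\]
where the strict inequality is exactly the margin condition Eq.~\eqref{eq:margin_condition}. Hence $y$ is the unique argmax of $s$, and FedRAN predicts $y$. The margin condition also makes $y$ the argmax of $s^{\star}$, so $y$ is the full ridge classifier's prediction as well, and the two classifiers agree.

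\textbf{Main obstacle.} There is essentially none; this is a corollary. The only points needing care are that the matrix bound is stated in Frobenius norm while the product step uses the spectral norm, which the inequality $\|\cdot\|_2\le\|\cdot\|_F$ resolves, and that passing from the $\ell_2$ score bound to per-coordinate bounds is via $\ell_\infty\le\ell_2$. One could sharpen the result using $\sqrt2\,\delta$ for pairwise differences, but the stated factor $2$ suffices.
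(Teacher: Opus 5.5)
Your proposal is correct and follows the paper's proof essentially step for step. The paper bounds $\|h^{\top}(W^{\star}_{1:t}-W_{1:t})\|_2\le\|h\|_2\|W^{\star}_{1:t}-W_{1:t}\|_F$ in one step, where you pass through the spectral norm, and it then uses the same per-class $\pm\delta$ argument to show the argmax is preserved under the margin condition.
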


\noindent\textbf{Inference.}
For a test sample $x$, FedRAN computes
\begin{equation}
    h(x)=\mathrm{ReLU}(f(x)P),
    \qquad
    s(x)=h(x)W_{1:t},
    \label{eq:fedran_inference_scores}
\end{equation}
and predicts
\begin{equation}
    \hat y
    =
    \arg\max_{c\in\{1,\ldots,C_t\}}[s(x)]_c.
    \label{eq:fedran_prediction}
\end{equation}
After feature extraction, inference is a single matrix-vector score computation with the analytic classifier.

\noindent\textbf{Cost summary.}
FedRAN replaces iterative model-update communication with one-shot statistic upload per task and replaces the full $M\times M$ analytic solve with rank-$r$ QR-SVD merging and a subspace ridge update. A detailed communication and computation analysis is provided in Appendix~\ref{app:cost_analysis}.
\section{Evaluation}
\label{evaluation}

\begin{figure*}
    \centering
    \includegraphics[width=0.98\textwidth]{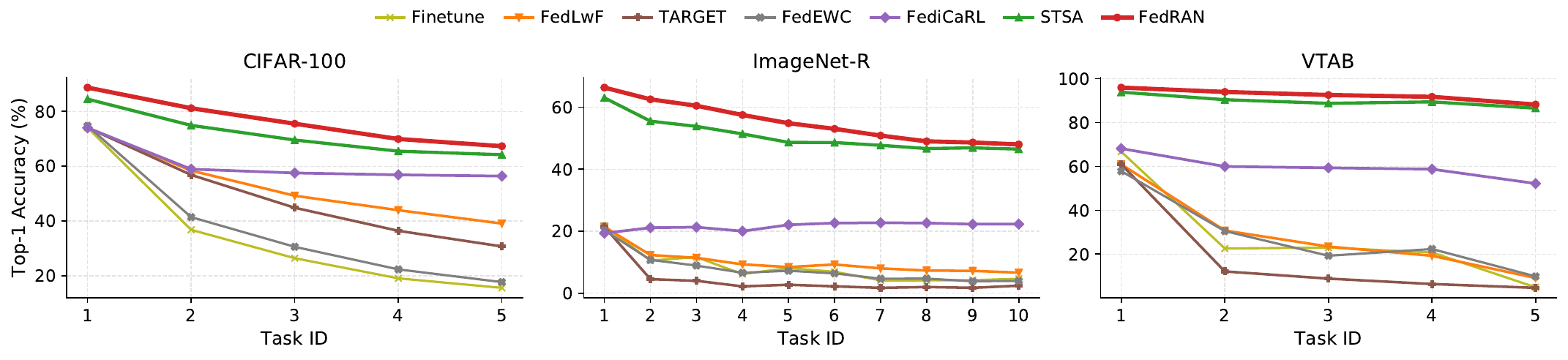}
    \caption{Task-wise accuracy comparison across CIFAR-100, ImageNet-R, and VTAB with $\beta=0.1$. FedRAN consistently achieves higher accuracy across sequential tasks, demonstrating lower catastrophic forgetting than existing FCL baselines.}
    \label{fig:task_accuracy}
\end{figure*}

\subsection{Experimental Setup}

We evaluate FedRAN in the class-incremental learning (CIL) setting, where new classes arrive over a sequence of tasks. Each task introduces a new subset of classes, and the model is evaluated across all classes observed up to that task. At each task, clients collaboratively update the global model under the federated setting. For the federated setup, we use $K=5$ clients. To simulate non-IID client data, the training samples of each task are partitioned across clients using a Dirichlet distribution~\cite{hsu2019measuring} with parameter $\beta=[0.1,  0.5, 1]$. A smaller $\beta$ produces more skewed class distributions across clients, resulting in stronger non-IID partitions.

We evaluate two backbone settings, ResNet-18~\cite{resnet} and ViT-B/16~\cite{vitb16}, pretrained on ImageNet. We set the batch size to $128$, and the ridge parameter is $\lambda=10^{-3}$. The ResNet setting uses projection dimension $M=8192$ with rank $r=2048$ on CIFAR-100 and ImageNet-R, and rank $r=512$ on VTAB. The ViT setting uses $M=2048$ and $r=512$. For prototype-based pseudo-labeling, the confidence threshold is set to $\tau=0.5$. All experiments are implemented in Python 3.8.20 with PyTorch 2.4.1 and run on an NVIDIA RTX A5000 GPU.

\subsection{Datasets}

We evaluate FedRAN on three widely used continual learning image classification benchmarks. CIFAR-100 is a standard image classification benchmark. ImageNet-R evaluates robustness under out-of-distribution visual renditions. VTAB spans diverse visual domains, including natural, medical, and remote sensing images. Table~\ref{tab:datasets} summarizes the datasets and task splits used in our evaluation. For CIFAR-100, we use 5 tasks in the ResNet setting and 10 tasks in the ViT setting, consistent with prior works.

\begin{table}
\centering
\caption{Datasets used for evaluation.}
\label{tab:datasets}
\small
\begin{tabular}{lccc}
\toprule
Dataset & Type & Classes & Tasks \\
\midrule
CIFAR-100~\cite{cifar100} 
& Standard image classification & 100 & 5/10 \\
ImageNet-R~\cite{imagenetR} 
& OOD generalization & 200 & 10 \\
VTAB~\cite{vtab} 
& Diverse visual domains & 50 & 5 \\
\bottomrule
\end{tabular}
\end{table}

\subsection{Metrics}
We evaluate FedRAN using two sets of metrics: accuracy and resource efficiency. \textbf{Accuracy metrics:} (i) final accuracy $A_T$, measured after the last task over all observed classes; and (ii) average accuracy $A_{\mathrm{avg}}$, computed by averaging task accuracy across the task stream. At each task, accuracy is evaluated over all classes observed up to that task. \textbf{Resource-efficiency metrics:} (i) communication cost, measured in MB as the maximum amount of data transmitted by a client in a single task; and (ii) runtime, measured as wall-clock training time in seconds, including both client-side and server-side computation for each task.

\subsection{Baselines}
We compare FedRAN against representative baselines for federated continual learning. Finetune~\cite{finetune} sequentially trains client models on new tasks and aggregates them using FedAvg~\cite{FedAvg}. FedEWC~\cite{fedEWC}, FedLwF~\cite{fedLWF}, and iCaRL~\cite{icarl} adapt standard continual learning strategies to the federated setting: FedEWC~\cite{fedEWC} regularizes changes to weights important from previous tasks, FedLwF~\cite{fedLWF} uses distillation to preserve previous-task behavior, and iCaRL~\cite{icarl} uses an exemplar replay-based continual learning technique. TARGET~\cite{target} is an exemplar-free federated class-continual learning method based on distillation and synthetic data generation. 
STSA~\cite{stsa} aggregates spatial-temporal feature statistics for FCL.

For pretrained ViT-based comparisons, we also include prompt- and adapter-based continual learning methods. DualPrompt~\cite{dualIP} learns complementary prompts for rehearsal-free continual learning. CodaPrompt~\cite{CodaP} uses decomposed attention-based prompts for continual adaptation. Fed-CPrompt~\cite{FedCPrompt} extends prompt learning to rehearsal-free federated continual learning using contrastive prompts. PiLoRA~\cite{Pilora} uses prototype-guided LoRA adaptation for federated class-incremental learning.

\subsection{Results}
\subsubsection{\textbf{Overall accuracy}}
Table~\ref{tab:main_accuracy_aavg_at} compares FedRAN with ResNet-based FCL baselines under different Dirichlet settings, using the same number of clients, task splits, data partitioning across clients, and backbone setting. Fig.~\ref{fig:task_accuracy} further shows accuracy across the task stream. FedRAN achieves the best $A_{\mathrm{avg}}$ and $A_T$ across all datasets and all $\beta$ values. STSA is the strongest baseline, but FedRAN consistently improves over it, with gains in $A_{\mathrm{avg}}$ of up to $4.80$ points on CIFAR-100, $4.24$ points on ImageNet-R, and $2.68$ points on VTAB.

The task accuracy curves in Fig.~\ref{fig:task_accuracy} show that optimization-based baselines, such as Finetune, exhibit larger accuracy drops as the number of tasks increases, indicating stronger forgetting under sequential client updates. In contrast, STSA and FedRAN maintain higher accuracy across the task stream. This suggests that freezing the backbone helps stabilize the client drift and reduces catastrophic forgetting of previously learned tasks. FedRAN further improves over STSA by using an SVD-based low-rank summary of the projected feature matrix, which preserves the dominant feature directions needed for the analytic classifier update. Additionally, FedRAN is stable across different Dirichlet settings. Its accuracy varies slightly from $\beta=0.1$ to $\beta=1.0$, whereas several optimization-based baselines change more noticeably across client partitions. This robustness comes from aggregating client-side random-feature statistics rather than averaging locally trained model parameters.

\begin{table}
\centering
\caption{Performance comparison under different Dirichlet settings. Each dataset reports average accuracy ($A_{\mathrm{avg}}$) and final accuracy ($A_T$). Improvement denotes the absolute percentage-point gain of FedRAN over the strongest baseline for each metric. Best and second-best results are shown in \textbf{bold} and \underline{underlined}, respectively.}
\label{tab:main_accuracy_aavg_at}
\setlength{\tabcolsep}{5.0pt}
\resizebox{\columnwidth}{!}{%
\begin{tabular}{@{}llcccccc@{}}
\toprule
Method & $\beta$
& \multicolumn{2}{c}{CIFAR-100}
& \multicolumn{2}{c}{ImageNet-R}
& \multicolumn{2}{c}{VTAB} \\
\cmidrule(lr){3-4}
\cmidrule(lr){5-6}
\cmidrule(l){7-8}
& & $A_{\mathrm{avg}}$ & $A_T$
& $A_{\mathrm{avg}}$ & $A_T$
& $A_{\mathrm{avg}}$ & $A_T$ \\
\midrule
Finetune
& 0.1 & 34.43 & 15.65 & 8.22 & 4.67 & 27.57 & 4.86 \\
& 0.5 & 36.09 & 16.57 & 9.20 & 4.63 & 35.56 & 19.65 \\
& 1.0 & 37.02 & 17.12 & 9.55 & 4.68 & 36.57 & 18.92 \\
\midrule
FedLwF
& 0.1 & 52.97 & 39.11 & 10.13 & 6.64 & 28.70 & 9.12 \\
& 0.5 & 55.71 & 38.24 & 11.74 & 7.92 & 38.31 & 24.12 \\
& 1.0 & 59.79 & 44.35 & 12.91 & 6.99 & 41.38 & 14.43 \\
\midrule
TARGET
& 0.1 & 48.61 & 30.76 & 4.49 & 2.43 & 18.57 & 4.61 \\
& 0.5 & 51.84 & 34.27 & 5.86 & 2.12 & 20.59 & 4.63 \\
& 1.0 & 53.58 & 35.02 & 7.04 & 2.98 & 25.45 & 5.93 \\
\midrule
FedEWC
& 0.1 & 37.39 & 17.75 & 7.74 & 3.97 & 27.95 & 9.83 \\
& 0.5 & 41.46 & 21.42 & 9.58 & 4.62 & 36.06 & 20.93 \\
& 1.0 & 42.81 & 21.70 & 9.69 & 4.38 & 38.10 & 18.45 \\
\midrule
FediCaRL
& 0.1 & 60.76 & 56.42 & 21.65 & 22.29 & 59.68 & 52.19 \\
& 0.5 & 66.05 & 61.10 & 23.84 & 25.49 & 68.14 & 63.97 \\
& 1.0 & 68.76 & 62.51 & 24.07 & 24.29 & 72.95 & 64.07 \\
\midrule
STSA
& 0.1 & \underline{71.74} & \underline{64.21} & \underline{50.89} & \underline{46.50} & \underline{89.81} & \underline{86.55} \\
& 0.5 & \underline{73.53} & \underline{66.06} & \underline{51.37} & \underline{47.92} & \underline{90.59} & \underline{87.27} \\
& 1.0 & \underline{74.37} & \underline{66.99} & \underline{51.77} & \underline{47.85} & \underline{90.38} & \underline{87.05} \\
\midrule
FedRAN
& 0.1 & \textbf{76.54} & \textbf{67.33} & \textbf{55.13} & \textbf{48.00} & \textbf{92.49} & \textbf{88.22} \\
& 0.5 & \textbf{76.50} & \textbf{67.32} & \textbf{55.16} & \textbf{48.13} & \textbf{92.49} & \textbf{88.22} \\
& 1.0 & \textbf{76.66} & \textbf{67.60} & \textbf{55.30} & \textbf{48.35} & \textbf{92.49} & \textbf{88.22} \\
\midrule
Improvement
& 0.1 & +4.80 & +3.12 & +4.24 & +1.50 & +2.68 & +1.67 \\
& 0.5 & +2.97 & +1.26 & +3.79 & +0.21 & +1.90 & +0.95 \\
& 1.0 & +2.29 & +0.61 & +3.53 & +0.50 & +2.11 & +1.17 \\
\bottomrule
\end{tabular}%
}
\end{table}

\subsubsection{\textbf{Communication overhead.}}
Table~\ref{tab:comm_cost} reports the communication cost per client per task. 
TARGET is shown as a representative optimization-based FCL method, since Finetune, FedLwF, FedEWC, and FediCaRL also communicate model updates through iterative training rounds and therefore have comparable communication costs. FedRAN substantially reduces communication costs compared with TARGET, whose costs are $31.90\times$, $30.65\times$, and $121.75\times$ higher on CIFAR-100, ImageNet-R, and VTAB, respectively. Compared with STSA, FedRAN also reduces communication by $1.45\times$ on CIFAR-100 and $2.77\times$ on both ImageNet-R and VTAB.
This reduction comes from the client summary design. In our experimental setup, we consider the communication-efficient STSA method, which communicates class-wise feature sums and class counts, including the dummy-class statistics used to estimate the Gram matrix. In contrast, FedRAN directly transmits a low-rank spectral summary $(V_{k,t},\sigma_{k,t})$ together with the label-feature statistic $B_{k,t} = H_{k,t}^\top Y_{k,t}$, with communication cost $O(Mr+r+MC_t)$. Combined with the accuracy results in Table~\ref{tab:main_accuracy_aavg_at}, FedRAN provides a stronger accuracy-communication tradeoff than FCL baselines.

\begin{table}
\centering
\caption{Communication cost per client per task. Costs are reported in MB, with the relative cost normalized to FedRAN shown in parentheses; lower is better. TARGET is shown as a representative of all optimization-based methods such as Finetune, FedLwF, FedEWC, and FediCaRL.}
\label{tab:comm_cost}
\setlength{\tabcolsep}{3.2pt}
\begin{tabular}{lccc}
\toprule
Method & CIFAR-100 & ImageNet-R & VTAB \\
\midrule
\begin{tabular}[c]{@{}l@{}} TARGET\end{tabular}
& 4283.82 (31.90$\times$)
& 4306.32 (30.65$\times$)
& 4276.96 (121.75$\times$) \\
STSA
& 194.59 (1.45$\times$)
& 389.18 (2.77$\times$)
& 97.29 (2.77$\times$) \\
FedRAN
& \textbf{134.27} (1.00$\times$)
& \textbf{140.52} (1.00$\times$)
& \textbf{35.13} (1.00$\times$) \\
\bottomrule
\end{tabular}
\end{table}

\subsubsection{\textbf{Computation Overhead}}
Table~\ref{tab:runtime} reports the runtime per client per task, including the computation performed by a single client and the corresponding server-side update for that task. Gradient-based baselines such as Finetune, FedLwF, TARGET, FedEWC, and FediCaRL require local backpropagation over multiple rounds and take hundreds of seconds per task. In contrast, FedRAN completes each task in a few seconds by avoiding iterative local training over multiple rounds. Across the gradient-based baselines, FedRAN is on average $190.3\times$ faster, with $96.9\times$, $246.9\times$, and $227.2\times$ faster on CIFAR-100, ImageNet-R, and VTAB, respectively. STSA is the closest runtime baseline, as it avoids expensive gradient-based training over multiple rounds. FedRAN is slightly slower than STSA on CIFAR-100 and ImageNet-R due to the additional SVD and QR-SVD spectral summarization, but it achieves consistently higher accuracy, as shown in Table~\ref{tab:main_accuracy_aavg_at}. Overall, FedRAN provides much lower runtime than gradient-based FCL methods and comparable runtime to the fastest baseline, while improving accuracy.

\begin{table}
\centering
\caption{Runtime per client per task (seconds). Lower is better.}
\label{tab:runtime}
\resizebox{0.88\linewidth}{!}{
\begin{tabular}{lccc}
\toprule
\textbf{Method} & \textbf{CIFAR-100} & \textbf{ImageNet-R} & \textbf{VTAB} \\
\midrule
Finetune & 322.28 & 589.72 & 229.37 \\
FedLwF & 316.48 & 559.71 & 194.83 \\
TARGET & 320.18 & 567.28 & 196.62 \\
FedEWC & 409.49 & 698.43 & 266.81 \\
FediCaRL & 346.44 & 646.68 & 225.45 \\
STSA & \textbf{1.75} & \textbf{2.13} & \underline{1.01} \\
FedRAN & \underline{3.54} & \underline{2.48} & \textbf{0.98} \\
\bottomrule
\end{tabular}
}
\end{table}

\begin{figure*}
    \centering
    \includegraphics[width=0.85\textwidth]{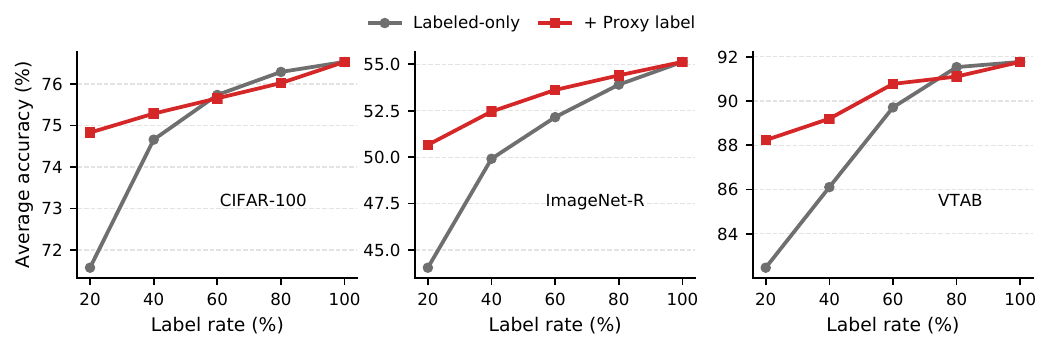}
    \caption{Effect of proxy-label under limited labeled data. FedRAN with proxy labels consistently improves average accuracy at low labeled rates, with the largest gains at 20\% labeled data.}
    \label{fig:label_rate}
\end{figure*}

\subsubsection{\textbf{Effect of Proxy-Labeling}}
Figure~\ref{fig:label_rate} evaluates FedRAN under limited labeled data at $\beta=0.1$. The labeled-only setting forms $Y_{k,t}$ and $B_{k,t}$ using only the labeled subset of data at each client, while the proxy-label setting assigns pseudo-labels to confident unlabeled samples using prototype similarity with threshold $\tau=0.5$ and includes them in the local statistics. Proxy-labeling provides the largest gains when only $20\%$ of the data is labeled, improving $A_{\mathrm{avg}}$ by $3.26$ on CIFAR-100, $6.61$ on ImageNet-R, and $5.76$ on VTAB. As the label rate increases, the gain decreases because the labeled-only class-prototype matrix becomes more reliable. These results show that proxy-labeling improves FedRAN under label scarcity by allowing confident unlabeled samples to contribute to $B_k$.

\begin{figure*}
    \centering
    \includegraphics[width=0.85\textwidth]{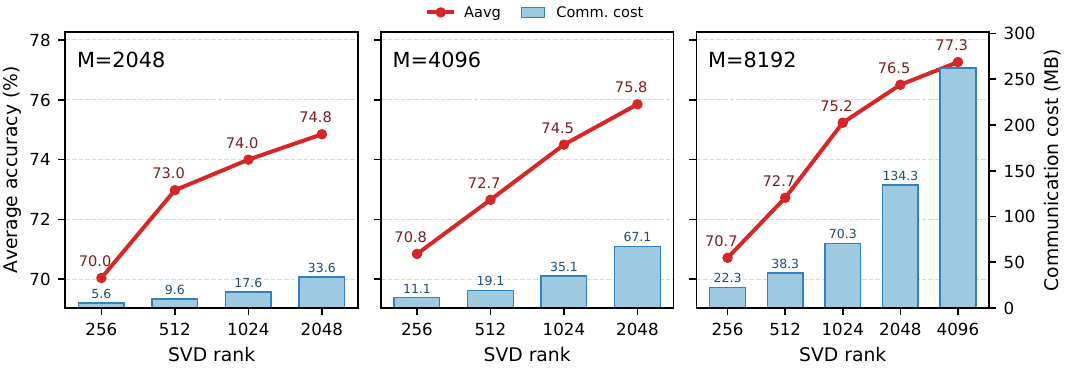}
    \caption{Effect of the SVD rank under different random projection dimensions $M$ on FedRAN for CIFAR-100 with $\beta=0.5$.} 
    \label{fig:rank_ablation}
\end{figure*}

\subsubsection{\textbf{Effect of Projection dimension and rank}}

Figure~\ref{fig:rank_ablation} shows the effect of SVD rank $r$ and random projection dimension $M$ on CIFAR-100 with $\beta=0.5$. Increasing $r$ improves $A_{\mathrm{avg}}$ as the client's low-rank spectral summary retains more dominant directions of the local Gram matrix. However, the gain saturates at larger ranks. For example, when $M=2048$, increasing $r$ from $256$ to $512$ improves $A_{\mathrm{avg}}$ by $3.0$, while increasing $r$ from $1024$ to $2048$ improves it by only $1.0$, while communication nearly doubles from $17.6$ MB to $33.6$ MB. This suggests that moderate ranks already capture most of the useful spectral information.
Increasing $M$ also improves accuracy by increasing feature separability, but it also further increases communication cost. For example, at $r=1024$, $A_{\mathrm{avg}}$ increases from $74.0$ at $M=2048$ to $75.2$ at $M=8192$, while communication increases from $17.6$ MB to $70.3$ MB. These results show the trade-off between accuracy and communication cost. Larger values of $M$ and $r$ improve accuracy, but moderate ranks offer most of the benefit at substantially lower communication cost.

\subsubsection{\textbf{Ablation on Model Components.}}

\begin{figure}[htbp]
    \centering
    \includegraphics[width=\columnwidth]{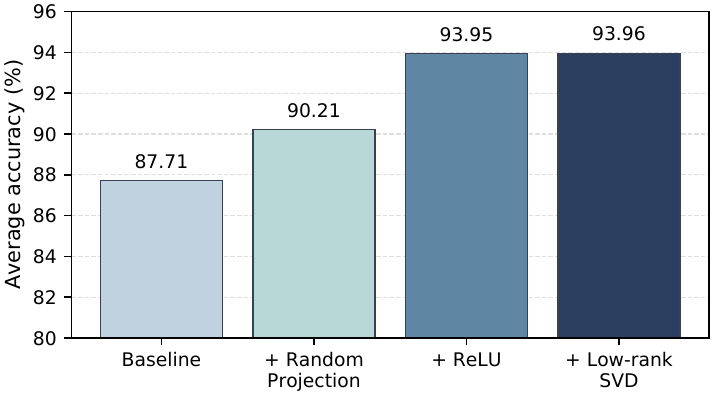}
    \caption{Ablation of FedRAN components on CIFAR-100 with ViT-B/16. Components are added sequentially to demonstrate the construction of the final FedRAN framework.} 
    \label{fig:model_components_ablation}
\end{figure}

To evaluate the architectural necessity of each FedRAN framework component, we perform an ablation study on CIFAR-100 with the ViT backbone for $\beta=1.0$ as shown in Figure \ref{fig:model_components_ablation}. We establish a baseline without random projection, ReLU, and low-rank SVD, achieving 87.71\% accuracy with raw ViT features. Linearly expanding this feature space to $M=1250$ using random projection increases the model's representational capacity, thereby increasing the average accuracy to 90.21\%. Applying the ReLU activation improves feature separability and further boosts accuracy to 93.95\%, validating the necessity of the non-linear operation and projection. We finally integrate the Low-Rank SVD approximation to complete the FedRAN framework. By transmitting only the dominant components rather than the full Gram matrix, FedRAN significantly reduces the communication cost from quadratic $O(M^2)$ to linear $O(Mr)$ for a fixed $r$. This reduction in communication overhead incurs no accuracy penalty, with the final model achieving an average accuracy of 93.96\%. Our ablation study shows that FedRAN successfully preserves the key second-order geometry required for high-accuracy continual learning while satisfying resource constraints.

\subsubsection{\textbf{Ablation on ViT as Backbone model.}}

Table~\ref{tab:vit_accuracy_aavg_at} compares FedRAN with ViT-based FCL baselines under different Dirichlet settings, using the same number of clients, task splits, and data partitioning across clients. All methods use the same pretrained ViT-B/16 model, with task 0 used for finetuning following the prior ViT-based setting. FedRAN achieves the best $A_{\mathrm{avg}}$ across all $\beta$ values on both datasets. Compared with STSA, the strongest baseline, FedRAN improves $A_{\mathrm{avg}}$ by up to $1.03$ percentage points on CIFAR-100 and $2.40$ percentage points on ImageNet-R. 
FedRAN also uses less communication than STSA in the ViT setting. On CIFAR-100, FedRAN requires $9.57$ MB per client per task compared with $10.50$ MB for STSA. On ImageNet-R, FedRAN requires $11.13$ MB compared with $21.00$ MB for STSA. Thus, FedRAN provides a stronger trade-off between accuracy and communication cost.

\begin{table}[ht]
\centering
\caption{ViT-B/16 results under different Dirichlet settings. Each dataset reports average accuracy ($A_{\mathrm{avg}}$) and final accuracy ($A_T$).} 
\label{tab:vit_accuracy_aavg_at}
\setlength{\tabcolsep}{5.5pt}
\begin{tabular}{@{}llcccc@{}}
\toprule
Method & $\beta$
& \multicolumn{2}{c}{CIFAR-100}
& \multicolumn{2}{c}{ImageNet-R} \\
\cmidrule(lr){3-4}
\cmidrule(l){5-6}
& & $A_{\mathrm{avg}}$ & $A_T$
& $A_{\mathrm{avg}}$ & $A_T$ \\
\midrule
Fed-DualP
& 0.1 & 69.11 & 56.31 & 35.58 & 27.92 \\
& 0.5 & 78.12 & 65.77 & 53.37 & 45.25 \\
& 1.0 & 83.02 & 75.07 & 56.61 & 48.58 \\
\midrule
Fed-CODAP
& 0.1 & 67.75 & 55.54 & 23.22 & 15.15 \\
& 0.5 & 74.80 & 64.02 & 48.49 & 43.00 \\
& 1.0 & 82.20 & 74.12 & 55.10 & 51.13 \\
\midrule
Fed-CPrompt
& 0.1 & 69.67 & 61.68 & 23.67 & 16.87 \\
& 0.5 & 75.07 & 65.87 & 45.42 & 39.12 \\
& 1.0 & 81.43 & 73.86 & 51.09 & 47.28 \\
\midrule
PiLoRA
& 0.1 & 82.85 & 74.09 & 51.85 & 47.85 \\
& 0.5 & 83.46 & 76.81 & 53.43 & 50.18 \\
& 1.0 & 83.59 & 77.17 & 55.47 & 51.92 \\
\midrule
STSA
& 0.1 & \underline{92.68} & \underline{88.73} & \underline{65.05} & \underline{58.34} \\
& 0.5 & \underline{93.30} & \underline{89.74} & \underline{65.38} & \underline{58.65} \\
& 1.0 & \underline{93.34} & \underline{89.68} & \underline{68.60} & \underline{61.29} \\
\midrule
FedRAN
& 0.1 & \textbf{93.71} & \textbf{89.61} & \textbf{67.45} &  \textbf{59.22} \\
& 0.5 & \textbf{93.91} & \textbf{89.90} & \textbf{67.76} &  \textbf{59.57} \\
& 1.0 & \textbf{93.96} & \textbf{89.82} & \textbf{69.80} &  \textbf{63.54} \\
\bottomrule
\end{tabular}
\end{table}
\section{Applications}

FedRAN finds its application in privacy-sensitive and resource-constrained FCL scenarios where client data arrives continuously after model deployment. In clinical imaging, hospitals and edge medical devices cannot share raw patient data due to privacy risks, labels often require expert annotation, and new diseases or imaging conditions may emerge over time. In drone and remote-sensing networks, clients operate with limited bandwidth, battery, and onboard compute while monitoring diverse terrains, weather conditions, and object categories. Similar constraints arise in monitoring systems such as human activity recognition, where local data is privacy-sensitive, and labels may be sparse or delayed.
These applications require continual updates without the need to train large client models iteratively or train under limited labeled data. FedRAN satisfies these constraints by keeping the backbone frozen, communicating compact, low-rank statistics, updating the classifier via a closed-form solution, and using prototype-based pseudo-labeling to exploit unlabeled client data when supervision is limited.

\section{Related Works}
\label{sec:related}

\subsection{Federated Continual Learning}

Existing FCL methods mainly address forgetting through optimization-based mechanisms: task-adaptive transfer across clients, global--local forgetting compensation, replay or synthetic data, distillation, orthogonal update constraints, and spatial--temporal heterogeneity modeling~\cite{FedWeiT,dong2023no,qi2023better,zhang2023target,bakman2023federated,FedTA,fedgtea2025}. Other approaches adapt standard continual-learning tools such as regularization, exemplar replay, and distillation, or reduce the trainable state with prompts, adapters, and parameter-efficient modules built on pretrained models~\cite{fedEWC,icarl,fedLWF,FedCPrompt,Pilora,c2prompt2025}. Recent resource-aware and on-device studies further emphasize that FCL accuracy must be maintained under constrained client computation, communication, and label availability~\cite{sacfl2025,fedinc2023,li2026resource}. FedRAN differs by replacing iterative client optimization with a forward-only statistic upload per task. Its frozen backbone also avoids client-induced representation drift, while continual adaptation is handled only through the analytic classifier state.

\subsection{Analytic Learning and Low-Rank Aggregation}

Analytic continual learning replaces iterative classifier training with closed-form updates over frozen features, including recursive, forward-only, dual-stream, generalized, and joint-training-oriented formulations~\cite{acil2022,foal2024,dsal2024,gacl2024,anacp2025}. Random-feature methods and pretrained representations provide a complementary route to stronger fixed features, while low-rank random-feature updates and truncated SVD improve stability in centralized continual learning~\cite{rahimi2007random,rahimi2008kitchen,mcdonnell2023ranpac,randumb2024,loranpac2025}. Federated analytic methods aggregate closed-form heads, class statistics, or spatial--temporal feature statistics for one-round, continual, personalized, unlearning, nearest-class, and low-rank Gram settings~\cite{fed3r2024,afcl2025,stsa,deepafl2026,apfl2025,fedunlearn2026,fedcof2025,fedncm2023,fedhip2025,ghofl2026,florg2026,oneshotfedridge2026}. Separately, federated PCA, streaming SVD, deterministic sketches, randomized SVD, hierarchical SVD, and distributed eigenspace estimation provide tools for compact low-rank subspace merging~\cite{FedPCA,vrehuuvrek2011subspace,eftekhari2019moses,iwen2016distsvd,ghashami2016fd,halko2011rsvd,hsvd2017,disteig2019,concatsvd2026}. FedRAN extends these into a two-level spatial--temporal QR-SVD merge of random-feature Gram summaries for FCL. It occupies a middle ground between transmitting the full $M\times M$ Gram matrix and estimating second-order structure from first-order class statistics.

\subsection{Label-Efficient Federated Learning}

Label-efficient federated learning studies how clients can learn when only part of the distributed data is labeled, often by enforcing consistency across clients or selectively using unlabeled examples~\cite{fedmatch2021,catchfed2025}. Prototype-based federated methods summarize class structure through feature-space representatives rather than raw data~\cite{fedproto2022}. FedRAN's semi-supervised variant uses the frozen feature space in the same spirit: confident unlabeled samples are assigned prototype-based pseudo-labels and then added to the analytic label--feature statistic. This uses unlabeled data without updating the backbone or adding an iterative semi-supervised training loop.

\section{Conclusion and Future Work}

We presented FedRAN, a resource-aware analytic framework for federated continual learning. Instead of iterative client training, each client freezes a pretrained backbone, forms random features through a fixed projection, and uploads two compact statistics: an exact label--feature statistic and a truncated-SVD summary of its random-feature Gram. The server merges these summaries spatially across clients and temporally across tasks with a two-level QR-SVD update, then solves the classifier in closed form, with deterministic bounds tying the approximation to the retained spectral subspace. A prototype-based pseudo-labeling variant brings confident unlabeled samples into the analytic update without any backbone training. Across CIFAR-100, ImageNet-R, and VTAB under non-IID streams, FedRAN improves average accuracy by up to $4.8$ points over the strongest baseline while using $30.6$--$121.8\times$ less per-client communication and training $190.3\times$ faster, and gains up to $6.61$ points from pseudo-labeling at a $20\%$ label rate.

In future work, we plan to study privacy-preserving variants of FedRAN, such as secure aggregation for the uploaded statistics. Also, we plan to extend analytic random-feature aggregation beyond class-incremental image classification and incorporate pseudo-label noise into the theoretical analysis.


\bibliographystyle{ACM-Reference-Format}
\bibliography{Reference}

\clearpage
\appendix

\section{Additional FedRAN Algorithms and Proofs}
\label{app:fedran_theory}

This appendix provides helper algorithms and proofs for the statements in Section~\ref{sec:system}. All results are deterministic and condition on the feature matrix and label matrix used by FedRAN. When pseudo-labeling is enabled, the label matrix is the augmented label matrix produced by pseudo-labeling.

\subsection{Helper Algorithms}
\label{app:helper_algorithms}

This subsection details the three routines invoked by Algorithm~\ref{alg:fedran}. Algorithm~\ref{alg:client_svd} computes a client's rank-$r$ spectral summary: it factorizes $H_{k,t}$ through whichever of $H_{k,t}$ or $H_{k,t}^{\top}$ has the smaller leading dimension, so the thin SVD is always taken on the $\min\{\tilde n_{k,t},M\}$ side, and returns the top-$r_{k,t}$ right singular vectors $V_{k,t}$ with their singular values $\sigma_{k,t}$. Algorithm~\ref{alg:qr_svd_merge} is the server-side QR--SVD subspace merge used for both spatial (across clients) and temporal (across tasks) aggregation: given two summaries it concatenates their scaled bases, orthonormalizes the result by QR, re-diagonalizes the small $R$ factor by SVD, and truncates back to rank $r$; on the first merge an empty operand is returned unchanged (up to rank truncation). Algorithm~\ref{alg:ssl} performs prototype-based pseudo-labeling: it forms $\ell_2$-normalized class prototypes from the labeled features, assigns each unlabeled feature to its nearest prototype by cosine similarity, and accepts the resulting pseudo-label only when the similarity exceeds the confidence threshold $\tau$. 

\begin{algorithm}[h]
\caption{Client-Side Truncated SVD Summary}
\label{alg:client_svd}
\begin{algorithmic}[1]
\Require Local random features $H_{k,t}\in\mathbb{R}^{\tilde n_{k,t}\times M}$, target rank $r$
\State Set $r_{k,t}\gets\min\{r,\tilde n_{k,t},M\}$
\If{$\tilde n_{k,t}\le M$}
    \State Compute thin SVD $H_{k,t}=U_{k,t}\mathrm{diag}(\sigma_{k,t})V_{k,t}^{\top}$
\Else
    \State Compute thin SVD $H_{k,t}^{\top}=V_{k,t}\mathrm{diag}(\sigma_{k,t})\widetilde U_{k,t}^{\top}$
\EndIf
\State Keep $V_{k,t}[:,1:r_{k,t}]\in\mathbb{R}^{M\times r_{k,t}}$ and $\sigma_{k,t}[1:r_{k,t}]\in\mathbb{R}^{r_{k,t}}$
\State \Return $(V_{k,t},\sigma_{k,t})$
\end{algorithmic}
\end{algorithm}

\begin{algorithm}[h]
\caption{Server-Side QR--SVD Subspace Merge}
\label{alg:qr_svd_merge}
\begin{algorithmic}[1]
\Require Previous summary $(V_a,\sigma_a)$, new summary $(V_b,\sigma_b)$, target rank $r$
\If{$V_a$ is empty}
    \State \Return $(V_b[:,1:\min\{r,\mathrm{cols}(V_b)\}],\sigma_b[1:\min\{r,\mathrm{len}(\sigma_b)\}])$
\EndIf
\State Form $A\gets[V_a\mathrm{diag}(\sigma_a),\; V_b\mathrm{diag}(\sigma_b)]$
\State Compute QR factorization $A=QR$
\State Compute SVD $R=U_R\mathrm{diag}(\sigma)W_R^{\top}$
\State Keep top $r$ components: $V\gets Q U_R[:,1:r]$, $\sigma\gets\sigma[1:r]$
\State \Return $(V,\sigma)$
\end{algorithmic}
\end{algorithm}

\begin{algorithm}[h]
\caption{Prototype-Based SSL Pseudo-Labeling}
\label{alg:ssl}
\begin{algorithmic}[1]
\Require Labeled backbone features $Z^{\ell}_{k,t}$, labels $y^{\ell}_{k,t}$, unlabeled backbone features $Z^{u}_{k,t}$, confidence threshold $\tau$
\For{each class $c$ in $y^{\ell}_{k,t}$}
    \State $p_{k,t,c}\gets\mathrm{mean}(Z^{\ell}_{k,t}[y^{\ell}_{k,t}=c])$
    \State $p_{k,t,c}\gets p_{k,t,c}/\|p_{k,t,c}\|_2$
\EndFor
\State Initialize accepted pseudo-labeled set $(Z^p_{k,t},y^p_{k,t})\gets\emptyset$
\For{each unlabeled feature $u\in Z^u_{k,t}$}
    \State $\bar u\gets u/\|u\|_2$
    \State $c^{\star}\gets\arg\max_c \bar u^{\top}p_{k,t,c}$
    \If{$\bar u^{\top}p_{k,t,c^{\star}}\ge\tau$}
        \State Add $(u,c^{\star})$ to $(Z^p_{k,t},y^p_{k,t})$
    \EndIf
\EndFor
\State \Return $(\widetilde Z_{k,t},\widetilde y_{k,t})=(Z^{\ell}_{k,t}\cup Z^p_{k,t},\; y^{\ell}_{k,t}\cup y^p_{k,t})$
\end{algorithmic}
\end{algorithm}

\subsection{Communication and Computation Cost}
\label{app:cost_analysis}

We summarize the per-task cost of FedRAN and compare it with gradient-based FCL and exact analytic FCL. The costs below exclude the one-time distribution of the frozen backbone and shared random projection. We assume that after each task the server returns a dense classifier $W_{1:t}\in\mathbb{R}^{M\times C_t}$ to each client for local inference. Thus, analytic methods have downlink cost $MC_t$ values per client per task. If inference is performed only on the server, this downlink can be omitted.

Let $K$ be the number of clients, $R_t$ the number of communication rounds for task $t$, $E$ the number of local epochs, $S_{\mathrm{model}}$ the number of trainable model parameters, $C_{\mathrm{bp}}$ the cost of one forward/backward pass per sample through the trainable model, $C_f$ the cost of one frozen-backbone forward pass per sample, $\tilde n_{k,t}$ the number of samples used by client $k$ at task $t$ after pseudo-labeling, $d$ the frozen-backbone feature dimension, $M$ the random-feature dimension, $r$ the retained rank, and $C_t$ the number of classes observed up to task $t$.

\noindent\textbf{Communication.}
Table~\ref{tab:comm_complexity_appendix} reports per-client per-task communication in transmitted floating-point values. Gradient-based FCL exchanges a trainable model in every communication round. Exact analytic FCL avoids iterative model exchange, but each client uploads the full local Gram matrix $G_{k,t}\in\mathbb{R}^{M\times M}$ and the label-feature statistic $B_{k,t}\in\mathbb{R}^{M\times C_t}$. FedRAN instead uploads a low-rank spectral summary $(V_{k,t},\sigma_{k,t})$ and the exact label-feature statistic $B_{k,t}$, reducing the dominant uplink term from $M^2$ to $Mr+r$.

\begin{table}[t]
\centering
\caption{Per-client per-task communication cost in transmitted floating-point values. The downlink assumes the server returns the dense classifier $W_{1:t}\in\mathbb{R}^{M\times C_t}$ to each client.}
\label{tab:comm_complexity_appendix}
\small
\begin{tabular}{lcc}
\toprule
Method & Uplink & Downlink \\
\midrule
Gradient-based FCL
& $R_tS_{\mathrm{model}}$
& $R_tS_{\mathrm{model}}$ \\
Exact analytic FCL
& $M^2+MC_t$
& $MC_t$ \\
FedRAN
& $Mr+r+MC_t$
& $MC_t$ \\
\bottomrule
\end{tabular}
\end{table}

\noindent\textbf{Computation.}
Table~\ref{tab:comp_complexity_appendix} reports dominant computation costs. Client-side cost is per client per task, while server-side cost is per task. For gradient-based FCL, client computation is dominated by repeated local backpropagation over $R_t$ communication rounds and $E$ local epochs, while server computation is dominated by aggregating model parameters. Exact analytic FCL avoids backpropagation but constructs the full local Gram matrix on the client and solves a full $M$-dimensional ridge system on the server. FedRAN avoids both full Gram construction and full Gram inversion. Each client computes random features, constructs $B_{k,t}$ by sparse class-wise accumulation, and computes a truncated SVD summary of $H_{k,t}$. The server performs QR-SVD merges over rank-$r$ summaries and solves the classifier in the retained subspace.

For the client-side SVD, an exact thin SVD of $H_{k,t}\in\mathbb{R}^{\tilde n_{k,t}\times M}$ costs
\[
    O\!\left(\min\{\tilde n_{k,t}^{2}M,\;\tilde n_{k,t}M^{2}\}\right),
\]
depending on whether the computation is performed through $H_{k,t}$ or $H_{k,t}^{\top}$. In the common regime $\tilde n_{k,t}\ll M$, this becomes $O(\tilde n_{k,t}^{2}M)$.

\begin{table}[t]
\centering
\caption{Dominant computation cost. Client-side cost is per client per task; server-side cost is per task.}
\label{tab:comp_complexity_appendix}
\small
\resizebox{\columnwidth}{!}{%
\begin{tabular}{lcc}
\toprule
Method & Client-side cost & Server-side cost \\
\midrule
Gradient-based FCL
& $O(R_tE\tilde n_{k,t}C_{\mathrm{bp}})$
& $O(R_tKS_{\mathrm{model}})$ \\
Exact analytic FCL
& $O(\tilde n_{k,t}C_f+\tilde n_{k,t}dM+\tilde n_{k,t}M^2+\tilde n_{k,t}M)$
& $O(K(M^2+MC_t)+M^3+M^2C_t)$ \\
FedRAN
& $O\!\left(\tilde n_{k,t}C_f+\tilde n_{k,t}dM+\tilde n_{k,t}M+\min\{\tilde n_{k,t}^{2}M,\tilde n_{k,t}M^{2}\}\right)$
& $O(K(Mr^2+r^3+MC_t)+MrC_t)$ \\
\bottomrule
\end{tabular}%
}
\end{table}

\noindent\textbf{Discussion.}
The dominant uplink cost of exact analytic FCL is quadratic in $M$ because each client transmits $G_{k,t}$. FedRAN reduces this term to $Mr+r$ by transmitting the rank-$r$ spectral summary instead. On the computation side, exact analytic FCL constructs the full local Gram matrix at cost $O(\tilde n_{k,t}M^2)$ and solves a full ridge system with $O(M^3+M^2C_t)$ server cost. FedRAN replaces these operations with local SVD summarization and server-side QR-SVD merging over rank-$r$ factors. Since $r\ll M$, this avoids full second-order inversion while retaining the dominant random-feature geometry used by the analytic classifier.

\subsection{Notation for the Proofs}
\label{app:proof_notation}

Let $G^{\star}_{1:t}$ denote the exact Gram matrix accumulated over all clients and tasks up to $t$, and let $B_{1:t}$ denote the corresponding label-feature statistic:
\begin{equation}
    G^{\star}_{1:t}
    =
    \sum_{\tau=1}^{t}\sum_{k=1}^{K}H_{k,\tau}^{\top}H_{k,\tau},
    \qquad
    B_{1:t}
    =
    \sum_{\tau=1}^{t}\sum_{k=1}^{K}H_{k,\tau}^{\top}Y_{k,\tau}.
\end{equation}
For a symmetric positive semidefinite matrix $S$, let $\mathcal{T}_r(S)$ denote its best rank-$r$ truncation obtained by keeping its $r$ largest eigenvalues and associated eigenvectors. Unless stated otherwise, singular values and eigenvalues are sorted in descending order and indexed from $1$. If an omitted singular value or eigenvalue does not exist, we set it to $0$ by convention.

For each local feature matrix $H_{k,\tau}$, let
\begin{equation}
    H_{k,\tau}
    =
    U_{k,\tau}
    \mathrm{diag}(s_{k,\tau,1},s_{k,\tau,2},\ldots)
    V_{k,\tau}^{\top}
\end{equation}
denote its full SVD, with $s_{k,\tau,1}\ge s_{k,\tau,2}\ge\cdots\ge0$. The vector $\sigma_{k,\tau}$ used in the main algorithm contains only the retained top singular values, while $s_{k,\tau,i}$ denotes the full spectrum used for analysis.

\subsection{Proof of Proposition~\ref{prop:additivity}}
\label{app:proof_additivity}

\begin{proof}
Let $H_{1:t}$ be the row-wise concatenation of $H_{k,\tau}$ over all $k\in\{1,\ldots,K\}$ and $\tau\in\{1,\ldots,t\}$. Then
\begin{equation}
    H_{1:t}^{\top}H_{1:t}
    =
    \sum_{\tau=1}^{t}\sum_{k=1}^{K}H_{k,\tau}^{\top}H_{k,\tau},
\end{equation}
because multiplying a row-wise concatenation by its transpose sums the per-block Gram matrices. Similarly, if $Y_{1:t}$ is the corresponding row-wise concatenation of labels, then
\begin{equation}
    H_{1:t}^{\top}Y_{1:t}
    =
    \sum_{\tau=1}^{t}\sum_{k=1}^{K}H_{k,\tau}^{\top}Y_{k,\tau}.
\end{equation}
Thus, full spatial-temporal aggregation recovers exactly the centralized ridge statistics. Substituting these statistics into the closed-form ridge solution proves the final claim.
\end{proof}

\subsection{Proof of Proposition~\ref{prop:qr_svd_covariance}}
\label{app:proof_qr_svd}

\begin{proof}
By construction,
\begin{equation}
    A=
    [V_a\mathrm{diag}(\sigma_a),\;V_b\mathrm{diag}(\sigma_b)].
\end{equation}
Therefore,
\begin{equation}
    AA^{\top}
    =
    V_a\mathrm{diag}(\sigma_a^2)V_a^{\top}
    +
    V_b\mathrm{diag}(\sigma_b^2)V_b^{\top}.
\end{equation}
This proves that the covariance of the concatenated scaled basis equals the sum of the two input sketch covariances.

Now let $A=QR$ be a QR factorization with $Q^{\top}Q=I$, and let
\begin{equation}
    R=U_R\mathrm{diag}(\bar\sigma)W_R^{\top}
\end{equation}
be the SVD of $R$, with singular values $\bar\sigma_1\ge\bar\sigma_2\ge\cdots\ge0$. Then
\begin{equation}
    AA^{\top}
    =
    QRR^{\top}Q^{\top}
    =
    Q U_R\mathrm{diag}(\bar\sigma^2)U_R^{\top}Q^{\top}.
\end{equation}
Thus, without rank truncation, taking $V=QU_R$ and $\sigma=\bar\sigma$ gives
\begin{equation}
    V\mathrm{diag}(\sigma^2)V^{\top}=AA^\top.
\end{equation}

If FedRAN retains only the top $r$ components, then
\begin{equation}
    V=Q U_R^{(:,1:r)},
    \qquad
    \sigma=\bar\sigma_{1:r},
\end{equation}
and
\begin{equation}
    V\mathrm{diag}(\sigma^2)V^{\top}
    =
    \mathcal{T}_r(AA^\top).
\end{equation}
Since $AA^\top$ is symmetric positive semidefinite, the Eckart--Young--Mirsky theorem implies that $\mathcal{T}_r(AA^\top)$ is the best rank-$r$ approximation to $AA^\top$ in any unitarily invariant norm, including spectral and Frobenius norms. If $\lambda_i(AA^\top)=\bar\sigma_i^2$, then the residual satisfies
\begin{equation}
    \|AA^\top-\mathcal{T}_r(AA^\top)\|_2
    =
    \lambda_{r+1}(AA^\top),
\end{equation}
and
\begin{equation}
    \|AA^\top-\mathcal{T}_r(AA^\top)\|_F
    =
    \left(\sum_{i>r}\lambda_i(AA^\top)^2\right)^{1/2}.
\end{equation}
This proves the claim.
\end{proof}

\subsection{Proof of Theorem~\ref{thm:gram_error}}
\label{app:proof_gram_error}

\begin{proof}
We track exactly where information is discarded. The first source is local truncation at each client. For client $k$ and task $\tau$, the exact local Gram matrix is
\begin{equation}
    G_{k,\tau}=H_{k,\tau}^{\top}H_{k,\tau}.
\end{equation}
Let $\widetilde G_{k,\tau}$ be the rank-$r_{k,\tau}$ local SVD approximation used by FedRAN:
\begin{equation}
    \widetilde G_{k,\tau}
    =
    V_{k,\tau}
    \mathrm{diag}(\sigma_{k,\tau}^{2})
    V_{k,\tau}^{\top}.
\end{equation}
Because $\widetilde G_{k,\tau}$ keeps the top $r_{k,\tau}$ eigen-directions of $G_{k,\tau}$, the local residual
\begin{equation}
    R^{\mathrm{loc}}_{k,\tau}
    =
    G_{k,\tau}-\widetilde G_{k,\tau}
\end{equation}
is positive semidefinite. Its spectral, Frobenius, and trace errors are
\begin{equation}
    \|R^{\mathrm{loc}}_{k,\tau}\|_2
    =
    s_{k,\tau,r_{k,\tau}+1}^{2},
    \label{eq:local_spectral_residual}
\end{equation}
\begin{equation}
    \|R^{\mathrm{loc}}_{k,\tau}\|_F
    =
    \left(\sum_{i>r_{k,\tau}}s_{k,\tau,i}^{4}\right)^{1/2},
    \qquad
    \mathrm{tr}(R^{\mathrm{loc}}_{k,\tau})
    =
    \sum_{i>r_{k,\tau}}s_{k,\tau,i}^{2}.
    \label{eq:local_energy_residual}
\end{equation}
The index $r_{k,\tau}+1$ denotes the first omitted singular value of the full matrix $H_{k,\tau}$, not an entry of the retained vector $\sigma_{k,\tau}$.

The second source of approximation is server-side merge truncation. Let $\mathcal{M}_t$ denote the set of all spatial and temporal QR-SVD merges performed up to task $t$. For each merge $j\in\mathcal{M}_t$, let $A_j$ be the concatenated scaled basis used in Eq.~\eqref{eq:merge_A}, and define
\begin{equation}
    S_j=A_jA_j^\top.
\end{equation}
Before truncation, $S_j$ is exactly the sum of the two input sketch covariances by Proposition~\ref{prop:qr_svd_covariance}. FedRAN replaces $S_j$ with $\mathcal{T}_r(S_j)$, introducing the residual
\begin{equation}
    R^{\mathrm{merge}}_j
    =
    S_j-\mathcal{T}_r(S_j).
\end{equation}
If $\lambda_{j,1}\ge\lambda_{j,2}\ge\cdots\ge0$ are the eigenvalues of $S_j$, then
\begin{equation}
    \|R^{\mathrm{merge}}_j\|_2
    =
    \lambda_{j,r+1},
    \label{eq:merge_spectral_residual}
\end{equation}
\begin{equation}
    \|R^{\mathrm{merge}}_j\|_F
    =
    \left(\sum_{i>r}\lambda_{j,i}^{2}\right)^{1/2},
    \qquad
    \mathrm{tr}(R^{\mathrm{merge}}_j)
    =
    \sum_{i>r}\lambda_{j,i}.
    \label{eq:merge_energy_residual}
\end{equation}

We now relate these discarded terms to the final FedRAN sketch. Before any server-side merge, replacing each exact local Gram matrix by its local sketch discards $\sum_{\tau,k}R^{\mathrm{loc}}_{k,\tau}$. During each QR-SVD merge, replacing $S_j$ by $\mathcal{T}_r(S_j)$ discards $R^{\mathrm{merge}}_j$. Since all these residuals lie in the same ambient $M$-dimensional random-feature space, the final error decomposes as
\begin{equation}
    G^{\star}_{1:t}-\widetilde G_{1:t}
    =
    \sum_{\tau=1}^{t}\sum_{k=1}^{K}
    R^{\mathrm{loc}}_{k,\tau}
    +
    \sum_{j\in\mathcal{M}_t}
    R^{\mathrm{merge}}_j.
    \label{eq:gram_error_decomposition}
\end{equation}
Taking spectral norms and using the residual identities above gives
\begin{equation}
    \|G^{\star}_{1:t}-\widetilde G_{1:t}\|_2
    \le
    \sum_{\tau=1}^{t}\sum_{k=1}^{K}
    s_{k,\tau,r_{k,\tau}+1}^{2}
    +
    \sum_{j\in\mathcal{M}_t}
    \lambda_{j,r+1}.
\end{equation}
This proves Eq.~\eqref{eq:gram_bound}.
\end{proof}

\subsection{An Interpretable Relative-Tail Bound}
\label{app:relative_tail_bound}

Theorem~\ref{thm:gram_error} gives an instance-dependent bound in terms of the exact discarded singular and eigenvalues. We can further express this bound through aggregate feature energy under a standard relative-tail assumption.

\noindent\textbf{Assumption.}
Suppose there exist $\eta_{\mathrm{loc}},\eta_{\mathrm{merge}}\in[0,1]$ such that every local truncation discards at most an $\eta_{\mathrm{loc}}$ fraction of local feature energy,
\begin{equation}
    \sum_{i>r_{k,\tau}}s_{k,\tau,i}^{2}
    \le
    \eta_{\mathrm{loc}}\|H_{k,\tau}\|_F^2,
    \qquad
    \forall k,\tau,
    \label{eq:relative_local_tail}
\end{equation}
and every server-side merge truncation discards at most an $\eta_{\mathrm{merge}}$ fraction of the trace energy being merged,
\begin{equation}
    \sum_{i>r}\lambda_{j,i}
    \le
    \eta_{\mathrm{merge}}\mathrm{tr}(S_j),
    \qquad
    \forall j\in\mathcal{M}_t.
    \label{eq:relative_merge_tail}
\end{equation}
Let
\begin{equation}
    E_t
    =
    \sum_{\tau=1}^{t}\sum_{k=1}^{K}
    \|H_{k,\tau}\|_F^2
\end{equation}
be the total random-feature energy observed up to task $t$. Let $L_t$ be the maximum number of rank-truncating server merges in which any local client summary can participate before task $t$.

\begin{fedrancor}[Relative-tail Gram bound]
\label{cor:relative_tail_gram}
Under the relative-tail assumption,
\begin{equation}
    \|G^{\star}_{1:t}-\widetilde G_{1:t}\|_2
    \le
    \mathrm{tr}(G^{\star}_{1:t}-\widetilde G_{1:t})
    \le
    \left(\eta_{\mathrm{loc}}+\eta_{\mathrm{merge}}L_t\right)E_t.
    \label{eq:relative_tail_gram_bound}
\end{equation}
If additionally $\|h(x)\|_2^2\le R^2$ for every random feature vector and $N_t=\sum_{\tau=1}^{t}\sum_{k=1}^{K}\tilde n_{k,\tau}$, then
\begin{equation}
    \|G^{\star}_{1:t}-\widetilde G_{1:t}\|_2
    \le
    \left(\eta_{\mathrm{loc}}+\eta_{\mathrm{merge}}L_t\right)R^2N_t.
    \label{eq:relative_tail_data_bound}
\end{equation}
\end{fedrancor}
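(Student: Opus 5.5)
We bound the spectral norm by the trace and then bound the trace term by term, using the residual decomposition from the proof of Theorem~\ref{thm:gram_error}.

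\textbf{Step 1: spectral norm by trace.} By Eq.~\eqref{eq:gram_error_decomposition},
\[
G^{\star}_{1:t}-\widetilde G_{1:t}=\sum_{\tau,k}R^{\mathrm{loc}}_{k,\tau}+\sum_{j\in\mathcal{M}_t}R^{\mathrm{merge}}_j .
\]
Each $R^{\mathrm{loc}}_{k,\tau}$ is PSD, since it consists of the discarded eigen-directions of $G_{k,\tau}$. Each $R^{\mathrm{merge}}_j=S_j-\mathcal{T}_r(S_j)$ is PSD, since it is the tail of an eigendecomposition. So the sum is PSD, and its spectral norm equals its largest eigenvalue, which is at most its trace. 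This gives the first inequality of Eq.~\eqref{eq:relative_tail_gram_bound}.

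\textbf{Step 2: local terms.} By linearity of the trace and Eq.~\eqref{eq:local_energy_residual},
\[
\sum_{\tau,k}\mathrm{tr}(R^{\mathrm{loc}}_{k,\tau})=\sum_{\tau,k}\sum_{i>r_{k,\tau}}s_{k,\tau,i}^2\le \eta_{\mathrm{loc}}E_t ,
\]
where the inequality uses Eq.~\eqref{eq:relative_local_tail}.

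\textbf{Step 3: merge terms.} By Eq.~\eqref{eq:merge_energy_residual} and Eq.~\eqref{eq:relative_merge_tail},
\[
\sum_j\mathrm{tr}(R^{\mathrm{merge}}_j)\le\eta_{\mathrm{merge}}\sum_{j}\mathrm{tr}(S_j).
\]
It remains to show $\sum_j \mathrm{tr}(S_j)\le L_t E_t$; this is the step I expect to be the main obstacle.

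First, by Eq.~\eqref{eq:AA_sum}, $\mathrm{tr}(S_j)$ is the sum of the traces of its two input sketches.

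Second, every sketch, local or merged, has trace at most the total energy $\sum\|H_{k,\tau}\|_F^2$ of the local blocks feeding into it. Truncations only remove nonnegative eigenvalues, and local sketches satisfy $\mathrm{tr}(\widetilde G_{k,\tau})\le\|H_{k,\tau}\|_F^2$.

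I would formalize this with an induction on the merge tree. Attribute to each local block $(k,\tau)$ its energy $\|H_{k,\tau}\|_F^2$. Then $\mathrm{tr}(S_j)\le\sum_{(k,\tau)\text{ below }j}\|H_{k,\tau}\|_F^2$.

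Summing over $j$ and exchanging the order of summation, each block's energy is counted once for every rank-truncating merge it participates in. By the definition of $L_t$, that count is at most $L_t$, so $\sum_j\mathrm{tr}(S_j)\le L_tE_t$.

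Two points need care here. Merges that do not actually truncate contribute zero residual, which is why only truncating merges enter $L_t$. The empty-operand first merge of Algorithm~\ref{alg:qr_svd_merge} has no concatenation and adds no residual beyond a truncation. If a first-merge truncation of $V_b$ is possible, count it as a truncating merge, consistent with $L_t$.

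\textbf{Step 4: data-dependent form.} Since $\|H_{k,\tau}\|_F^2=\sum_i\|h_i\|_2^2\le R^2\tilde n_{k,\tau}$, we get $E_t\le R^2N_t$. Substituting this into Eq.~\eqref{eq:relative_tail_gram_bound} gives Eq.~\eqref{eq:relative_tail_data_bound}.
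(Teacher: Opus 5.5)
Your proposal is correct and follows the same route as the paper's proof. The PSD residual decomposition from Theorem~\ref{thm:gram_error} gives spectral norm at most trace, the local and merge traces are bounded by the two relative-tail assumptions, the participation count $L_t$ bounds the merge energy by $L_tE_t$, and $E_t\le R^2N_t$ finishes. Your merge-tree induction spells out the counting step that the paper compresses into one sentence: a sketch's trace never exceeds the energy of the blocks below it, and the sum over $j$ is restricted to truncating merges before the tail inequality is applied.
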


\begin{proof}
From Eq.~\eqref{eq:gram_error_decomposition}, the error is a sum of positive semidefinite residuals. Hence its spectral norm is at most its trace. The local residual trace is $\sum_{i>r_{k,\tau}}s_{k,\tau,i}^2$, which is bounded by Eq.~\eqref{eq:relative_local_tail}. The merge residual trace is $\sum_{i>r}\lambda_{j,i}$, which is bounded by Eq.~\eqref{eq:relative_merge_tail}. Since each local summary can participate in at most $L_t$ rank-truncating server merges, the total trace energy exposed to merge truncation is at most $L_tE_t$. Therefore,
\begin{equation}
    \mathrm{tr}(G^{\star}_{1:t}-\widetilde G_{1:t})
    \le
    \eta_{\mathrm{loc}}E_t+\eta_{\mathrm{merge}}L_tE_t.
\end{equation}
The final bound follows from $E_t=\sum_{\tau,k}\|H_{k,\tau}\|_F^2\le R^2N_t$ when every random feature vector has squared norm at most $R^2$.
\end{proof}

\subsection{Proof of Theorem~\ref{thm:weight_error}}
\label{app:proof_weight_error}

\begin{proof}
We suppress the subscript $1{:}t$ for readability. Let
\begin{equation}
    W^{\star}=(G^{\star}+\lambda I_M)^{-1}B
\end{equation}
be the full ridge solution. Let
\begin{equation}
    \widetilde G=V\Lambda V^{\top},
    \qquad
    \Lambda=\mathrm{diag}(\sigma^2),
\end{equation}
be the FedRAN Gram sketch. Define the full ridge solution associated with the sketched Gram matrix:
\begin{equation}
    W_{\widetilde G}=(\widetilde G+\lambda I_M)^{-1}B.
\end{equation}
Then
\begin{equation}
    \|W^{\star}-W\|_F
    \le
    \|W^{\star}-W_{\widetilde G}\|_F
    +
    \|W_{\widetilde G}-W\|_F.
    \label{eq:weight_error_decomp}
\end{equation}

For the first term, we use the standard inverse-difference identity
\begin{equation}
    (G^{\star}+\lambda I_M)^{-1}
    -
    (\widetilde G+\lambda I_M)^{-1}
    =
    (G^{\star}+\lambda I_M)^{-1}
    (\widetilde G-G^{\star})
    (\widetilde G+\lambda I_M)^{-1}.
\end{equation}
Since $G^{\star}$ and $\widetilde G$ are positive semidefinite, both inverse factors have spectral norm at most $1/\lambda$. Therefore,
\begin{equation}
    \|W^{\star}-W_{\widetilde G}\|_F
    \le
    \frac{\|G^{\star}-\widetilde G\|_2}{\lambda^2}\|B\|_F
    \le
    \frac{\varepsilon_G(t)}{\lambda^2}\|B\|_F.
    \label{eq:sketch_ridge_error}
\end{equation}

For the second term, because $V^{\top}V=I$,
\begin{equation}
    \widetilde G+\lambda I_M
    =
    V(\Lambda+\lambda I_r)V^{\top}
    +
    \lambda(I_M-VV^{\top}).
\end{equation}
Hence,
\begin{equation}
    (\widetilde G+\lambda I_M)^{-1}
    =
    V(\Lambda+\lambda I_r)^{-1}V^{\top}
    +
    \frac{1}{\lambda}(I_M-VV^{\top}).
    \label{eq:low_rank_ridge_inverse}
\end{equation}
FedRAN uses the subspace-constrained component
\begin{equation}
    W
    =
    V(\Lambda+\lambda I_r)^{-1}V^{\top}B.
\end{equation}
Substituting Eq.~\eqref{eq:low_rank_ridge_inverse} gives
\begin{equation}
    W_{\widetilde G}-W
    =
    \frac{1}{\lambda}(I_M-VV^{\top})B,
\end{equation}
and therefore
\begin{equation}
    \|W_{\widetilde G}-W\|_F
    =
    \frac{1}{\lambda}
    \|(I_M-VV^{\top})B\|_F.
    \label{eq:subspace_residual_error}
\end{equation}
Combining Eq.~\eqref{eq:weight_error_decomp}, Eq.~\eqref{eq:sketch_ridge_error}, and Eq.~\eqref{eq:subspace_residual_error} proves the theorem.
\end{proof}

\subsection{Proof of Corollary~\ref{cor:score_margin}}
\label{app:proof_score_margin}

\begin{proof}
Let $\Delta W=W^{\star}_{1:t}-W_{1:t}$. Then
\begin{equation}
    \|s^{\star}-s\|_2
    =
    \|h^{\top}\Delta W\|_2
    \le
    \|h\|_2\|\Delta W\|_F
    \le
    \|h\|_2\varepsilon_W(t),
\end{equation}
which proves the score perturbation bound.

Let $y=\arg\max_c s^{\star}_c$ and let $\delta=\|h\|_2\varepsilon_W(t)$. The score perturbation bound implies $|s_c-s^{\star}_c|\le\delta$ for every class $c$. Thus,
\begin{equation}
    s_y\ge s^{\star}_y-\delta,
    \qquad
    s_c\le s^{\star}_c+\delta
    \quad\text{for all }c\ne y.
\end{equation}
If
\begin{equation}
    s^{\star}_y-\max_{c\ne y}s^{\star}_c>2\delta,
\end{equation}
then $s_y>s_c$ for all $c\ne y$, so FedRAN and the full ridge classifier predict the same class.
\end{proof}

\subsection{Remarks on the Bounds}
\label{app:bound_remarks}

\noindent\textbf{Local versus merge approximation.}
Theorem~\ref{thm:gram_error} separates the approximation error into two interpretable sources. Local errors arise because clients retain only the top singular directions of $H_{k,t}$. Merge errors arise because the server repeatedly truncates merged spectral summaries back to rank $r$. Increasing $r$ decreases both terms but increases communication and server-side computation.

\noindent\textbf{Subspace residual.}
The second term in Theorem~\ref{thm:weight_error}, $\|(I-VV^{\top})B\|_F/\lambda$, appears because FedRAN intentionally solves the classifier in the retained subspace. If label-feature statistics are mostly contained in this subspace, FedRAN closely matches the full ridge solution. This term also clarifies the role of the rank hyperparameter: a larger retained subspace can reduce the residual at the cost of larger communication.

\noindent\textbf{Pseudo-labeling.}
The theory above conditions on the label matrix used in the analytic update. If pseudo-labeling introduces label noise, then $B$ changes because it is computed from the augmented labels. A separate label-noise analysis can be layered on top of these deterministic bounds by controlling $\|B_{\mathrm{pseudo}}-B_{\mathrm{true}}\|_F$, but this is orthogonal to the spectral aggregation analysis.

\end{document}